\newcommand{\Blue}[1]{{\color{blue}#1}}
\newcommand{\Red}[1]{{\color{red}#1}}
\DeclareMathOperator*{\argmax}{arg\,max}
\newcommand{\Xcal}{{\mathcal{X}}}
\newcommand{\Wcal}{{\mathcal{W}}}
\newcommand{\supp}{{\text{supp}}}
\newcommand{\bp}{{\mathbf{p}}}
\newcommand{\bg}{{\mathbf{g}}} 
\newcommand{\bx}{{\mathbf{x}}} \newcommand{\bv}{{\mathbf{v}}} 
\newcommand{\be}{{\mathbf{e}}} \newcommand{\bm}{{\mathbf{m}}} 
\newcommand{\bw}{{\mathbf{w}}} 
\newcommand{\bo}{{\mathbf{o}}} 
\newcommand{\bc}{{\mathbf{c}}} 
 \newcommand{\bl}{{\mathbf{l}}} 
\newtheorem{definition}{Definition}
\newtheorem{problem}{Problem}
\newtheorem{claim}{Claim}
\newtheorem{lemma}{Lemma}
\newtheorem{remark}{Remark}
\newtheorem{theorem}{Theorem}
\newtheorem{proposition}{Proposition}
\newcommand{\shortrnote}[1]{ &  & \text{\footnotesize\llap{#1}}}
\newcommand{\longrnote}[1]{ &  &  \\   &  &  &  &  & \notag \text{\footnotesize\llap{#1}}}
\DeclareOldFontCommand{\rm}{\normalfont\rmfamily}{\mathrm}
\DeclareOldFontCommand{\sf}{\normalfont\sffamily}{\mathsf}
\DeclareOldFontCommand{\tt}{\normalfont\ttfamily}{\mathtt}
\DeclareOldFontCommand{\bf}{\normalfont\bfseries}{\mathbf}
\DeclareOldFontCommand{\it}{\normalfont\itshape}{\mathit}
\DeclareOldFontCommand{\sl}{\normalfont\slshape}{\@nomath\sl}
\DeclareOldFontCommand{\sc}{\normalfont\scshape}{\@nomath\sc}
\newcommand{\R}{\mathbb{R}}
\title{Discrete Adversarial Attacks and Submodular Optimization\\ with Applications to Text Classification}
\author[$\star\thanks{Both authors contributed equally to this work}$]{Qi Lei}
\author[$\dagger^\ast$]{Lingfei Wu}
\author[$\dagger$]{Pin-Yu Chen}
\author[$\star$]{Alexandros G. Dimakis}
\author[$\star\ddagger$]{Inderjit S. Dhillon}
\author[$\dagger$]{Michael Witbrock}
\affil[ ]{$^\star$ UT Austin \hspace{1cm} $^\dagger$ IBM Research \hspace{1cm} $^\ddagger$ Amazon}
\affil[ ]{\texttt{\{leiqi@ices, dimakis@austin, dhillon@cs\}.utexas.edu} }
\affil[ ]{\texttt{\{wuli@us., pin-yu.chen@, witbrock@us.\}ibm.com }}
\begin{document}

\maketitle

\begin{abstract}
Adversarial examples are carefully constructed modifications to an input that completely change the output of a classifier but are imperceptible to humans. Despite these successful attacks for continuous data (such as image and audio samples), 
generating adversarial examples for discrete structures such as text has proven significantly more challenging. 
In this paper we formulate the attacks with discrete input on a set function as an optimization task. We prove that this set function is submodular for some popular neural network text classifiers under simplifying assumption. This finding guarantees a $1-1/e$ approximation factor for attacks that use the greedy algorithm. Meanwhile, we show how to use the gradient of the attacked classifier to guide the greedy search. Empirical studies with our proposed optimization scheme show significantly improved attack ability and efficiency, on three different text classification tasks over various baselines. We also use a joint sentence and word paraphrasing technique to maintain the original semantics and syntax of the text.
This is validated by a human subject evaluation in subjective metrics on the quality and semantic coherence of our generated adversarial text. 
\end{abstract}

\section{Introduction}

Adversarial examples are carefully constructed modifications to an input that completely change the output of a classifier but are imperceptible to humans. Spam filtering and the carefully-crafted emails designed to fool these early classifiers are the first examples of adversarial machine learning going back to 2004~~\cite{dalvi2004adversarial,lowd2005adversarial}; see also the comprehensive survey by Biggio et al.~~\cite{biggio2017wild}. 
Szegedy et al.~~\cite{szegedy2013intriguing} discovered that deep neural network image classifiers can be fooled with tiny pixel perturbations; exploration of this failure of robustness has received significant attention recently, see e.g. ~~\cite{goodfellow2014explaining,moosavi2016deepfool,papernot2016distillation,carlini2017towards,evtimov2017robust,chen2017zoo,chen2017ead,su2018robustness}.
 Adversarial training~~\cite{goodfellow2014explaining,madry2017towards} seems to be the state of the art in defense against adversarial attacks, but 
 creating robust classifiers remains challenging, especially for large image classifiers, see e.g. Athalye at al.~~\cite{athalye2018obfuscated}. 

Despite these successful attacks for continuous data (such as image and audio samples), 
generating adversarial examples for discrete structures such as text and code has proven significantly more challenging in two aspects:

One challenge is how to develop a fast yet (provably) effective attacking scheme. Gradient-based adversarial attacks for continuous data no longer directly apply to discrete structures. Although some variants are proposed when the model is differentiable to the embedding layer~\cite{papernot2016crafting,li2016understanding,ebrahimi2017hotflip,gong2018adversarial}, this line of methods achieve efficiency but suffer from poor success rate. \\
Meanwhile, another natural idea is to find feasible replacement for individual features like words or characters. However, since the space of possible combinations of substitutions  grows exponentially
with the length of input data, finding the optimal combination of substitutions is intractable. Recent heuristic attacks on NLP classifiers operate by greedy character-level or word-level replacements ~\cite{ebrahimi2017hotflip,kuleshov2018adversarial,yang2018greedy}.
However, greedy methods are usually slow, and it's theoretically not understood when they achieve good performance. 

The other issue is how to maintain the original functionality of the input. Specifically for text, it remains challenging to preserve semantic and syntactic properties of the original input from the point of view of a human. Existing methods either require to change too many features, or change the original meaning. For instance, \cite{kuleshov2018adversarial} alters up to $50\%$ of words in each input document to achieve a $30\%$ success rate. \cite{gong2018adversarial} attacks the document by replacing with completely different words. \cite{jia2017adversarial} inserts irrelevant sentences to the original text. Such changes can be easily detected by humans. 


In this paper we argue that these limitations can be 
be resolved with the framework we propose. We highlight our main contributions as follows:

We propose a general framework for discrete attacks. We apply our framework to designing adversarial attacks for text classifiers but our techniques can be applied more broadly. For instance, the attacks include but are not limited to malware detection, spam filtering, or even discrete attacks defined on continuous data, e.g., segmentation of an image.

We formulate the attacks with discrete input on a set function  as an optimization task. This problem, however, is provably NP-hard even for convex classifiers. We unify existing gradient-based as well as greedy methods using a general combinatorial optimization via further assumptions. We note that gradient methods solve a relaxed problem in polynomial time; while greedy algorithm for creating attacks has a provable $1-1/e$ approximation factor assuming the set function is submodular.
We theoretically show that for two natural classes of neural network text classifiers, the set functions defined by the attacks are submodular. We specifically analyze two classes of classifiers: The first is word-level CNN without dropout or softmax layers. The second is a recurrent neural network (RNN) with one-dimensional hidden units and arbitrary time steps. 

Nevertheless, greedy methods can be very time consuming when the space of attacks is large. We show how to use the gradient of the attacked classifier to guide the combinatorial search. Our proposed gradient-guided greedy method is inspired by the greedy coordinate descent Gauss-Southwell rule from continuous optimization theory.
The key idea is that we use the magnitude of the gradient to decide which features to attack in a greedy fashion.

We extensively validate the proposed attacks empirically. With the proposed optimization scheme, we show significantly improved attack performance over
most recent baselines. Meanwhile we propose a joint sentence and word paraphrasing technique to simultaneously ensure retention of the semantics and syntax of the text.



\begin{figure*}[htb]
\label{fig:example}
\footnotesize{
\noindent\fbox{
\parbox{\textwidth}{
Task: Sentiment Analysis. Classifier: LSTM. Original: 100\% Positive. ADV label: 100\% Negative. 
}
}
\noindent\fbox{
\parbox{\textwidth}{
I suppose I should write a review here since my little Noodle-oo is currently serving as their spokes dog in the photos. We both love Scooby Do's. They treat my little butt-faced dog like a prince and are receptive to correcting anything about the cut that I perceive as being weird. Like that funny poofy pompadour. Mohawk it out, yo. Done. In like five seconds my little man was looking fabulous and bad ass. Not something easily accomplished with a prancing pup that literally chases butterflies through tall grasses. (He ended up looking like a little lamb as the cut grew out too. So adorable.)  The shampoo they use here is also amazing. Noodles usually smells like tacos (a combination of beef stank and corn chips) but after getting back from the Do's, he smelled like Christmas morning! Sugar and spice and everything nice instead of frogs and snails and puppy dog tails. He's got some gender identity issues to deal with.   \st{The pricing is also cheaper than some of the big name conglomerates out there} \Red{The price is cheaper than some of the big names below}. I'm talking to you Petsmart! I've taken my other pup to Smelly Dog before, but unless I need dog sitting play time after the cut, I'll go with Scooby's. They genuinely seem to like my little Noodle monster.
}
}
\noindent\fbox{
\parbox{\textwidth}{
Task: Fake-News Detection. Classifier: LSTM. Original label: 100\% Fake. ADV label: 77\% Real}
}
\noindent\fbox{
\parbox{\textwidth}{
\st{Man} \Blue{Guy} punctuates high-speed chase with stop at In-N-Out Burger drive-thru Print  [Ed.\st{ - Well, that's} \Red{Okay, that 's} a new one.]  \st{A} \Blue{One} man is in custody after leading police on a bizarre chase into the east Valley on Wednesday night.  Phoenix police \st{began} \Red{has begun} following the suspect in Phoenix and the pursuit continue\st{d} into the east Valley, but it took a bizarre turn when the suspect stopped at an In-N-Out Burger restaurant’s \st{drive-thru} \Blue{drive-through} near Priest and Ray Roads in Chandler.  The suspect appeared to order food, but then drove away and got out of his pickup truck near Rock Wren Way and Ray Road.  He \st{then ran into a backyard} \Red{ran to the backyard} and tried to \st{get into a house through the back door} \Red{get in the home}.}
}
\noindent\fbox{
\parbox{\textwidth}{
Task: Spam Filtering. Classifier: WCNN. Original label: 100\% None-spam. ADV label: 100\% Spam}
}
\noindent\fbox{
\parbox{\textwidth}{
> > Hi All, \\
> > I'm new to R from a C and Octave/Matlab background.  \st{I am trying to > > construct} \Red{I 'm trying to build} some classes in R to which I want to attach \st{pieces of} data. \\
> > First, is attr(obj, 'member name')  > > this?  > > No, it isn't. You seem to be trying to deduce new-style classes from a > > representation used before R 2.4,  (actually, still used)  > > but in any case it would not be >> sensible. \st{Please consult} \Red{Contact} John M. Chambers. Programming with Data. > > Springer, New York, 1998, and/or William N. Venables and Brian D. Ripley. > > S Programming. Springer, New York, 2000, or for a shorter online resource: > >  http://www.stat.auckland.ac.nz/S-Workshop/Gentleman/Methods.pdf > > Unfortunately, all of those references are at least 4 years out of > date when it comes to S4 methods.  Is there any comprehensive > reference of the current implementation of the S4 OO system apart from > the source code?  Not that I know of, and \st{it is} \Red{it's} a moving target.  (E.g. I asked recently  about some anomalies in the S4 bit introduced for 2.4.0 and what the  intended semantics are.)  I've said before that I believe we can only  help solve some of the efficiency issues with S4 if we have a technical  manual.  It is unfair to pick out S4 here, but the 'R Internals' manual is an  attempt to document important implement\st{ation} \Blue{ing} details (mainly by studying  the code), and that has only got most of the way through src/main/*.c. 
    }
}
}
\caption{Examples of generated adversarial examples. The color red denotes sentence-level paraphrasing, and blue denotes word-level paraphrasing. }
\label{intro:example}
\end{figure*}

\section{Related Work}
Broadly speaking, adversarial examples refer to minimally modified natural examples that are spurious but perceptually similar and that lead to 
inconsistent decision making between humans and machine learning models. An example is automatically classifying an adversarial stop sign image (according to humans) as a speed limit sign. For continuous data such as images or audio, generating adversarial examples is often accomplished by crafting additive perturbations of natural examples,
resulting in visually imperceptible or inaudible noise that misleads a target machine learning model. These small yet effective perturbations are difficult for humans to detect, but will cause an apparently well-trained machine learning model to misbehave; in particular, neural networks have been shown to be susceptible to such attacks ~\cite{szegedy2013intriguing}, giving rise to substantial concern about safety-critical and security-centric machine learning applications.

For classifiers with discrete input structures, a simple approach for generating adversarial examples is to replace each feature with similar alternatives. Such features for text classification tasks are usually individual words or characters. Such attacks can be achieved using continuous word embeddings or with respect to some designed score function; this approach has been applied to attack
NLP classifiers~~\cite{papernot2016crafting,li2016understanding,miyato2016adversarial,samanta2017towards,liang2017deep,yao2017automated,gong2018adversarial,kuleshov2018adversarial,gao2018black,alzantot2018generating,yang2018greedy} and sequence-to-sequence models ~\cite{ebrahimi2017hotflip,wong2017dancin,zhao2017generating,cheng2018seq2sick}. The work in \cite{ribeiro2018semantically} considers semantically equivalent rules for debugging  NLP models, but under the same input structure.
This is a natural but limited practice to only consider attacks within one input structure, namely word or characters, but no joint attacks, nor the effect incurred from sentences.
Unlike prior work, we conduct a joint sentence and word paraphrasing technique. It considers sentence-level factors and allows more degrees of freedom in generating text adversarial examples, by exploring the rich set of semantically similar paraphrased sentences.

Jia and Liang studied adversarial examples in reading comprehension systems by inserting additional sentences ~\cite{jia2017adversarial}, which is beyond the concept of this paper since the approach changes the original meanings.
Another related line of research, although not cast as adversarial examples, focuses on improving  model robustness against  out-of-vocabulary terms ~\cite{belinkov2017synthetic} or obscured embedding space representations ~\cite{mrkvsic2016counter}.  


\section{Preliminary}
In this paper, we propose a general framework for generating adversarial examples with discrete input data. A collection of such data and corresponding attacks are presented in Table \ref{tab:general_tasks}.

To present our mathematical formulation, we start by introducing some notation. 

{\bf Input Structure.} 
Let the input $\bx=[x_1,x_2,\cdots,x_n]\in \Xcal^n$ be a list of $n$ features (might be padded). For text environment, the feature space $\Xcal$ can be the character, word, phrase, or sentence space. For the problem of malware detection, $\bx$ is a concatenation of code pieces.

  \begin{minipage}{\textwidth}
  \hspace{-0.5cm}
    \begin{minipage}[b]{0.4\textwidth}
    \centering
\begin{tabular}{c|c}
        \toprule
        input data & task\\
        \hhline{==}
        document & text classification \\
        \midrule
        code & malware detection \\
\midrule
url address & malicious website check\\
\bottomrule
    \end{tabular}
      \captionof{table}{Applications to the framework.}
      \label{tab:general_tasks}
    \end{minipage}
  \begin{minipage}[b]{0.55\textwidth}
    \centering
    \includegraphics[width=\textwidth]{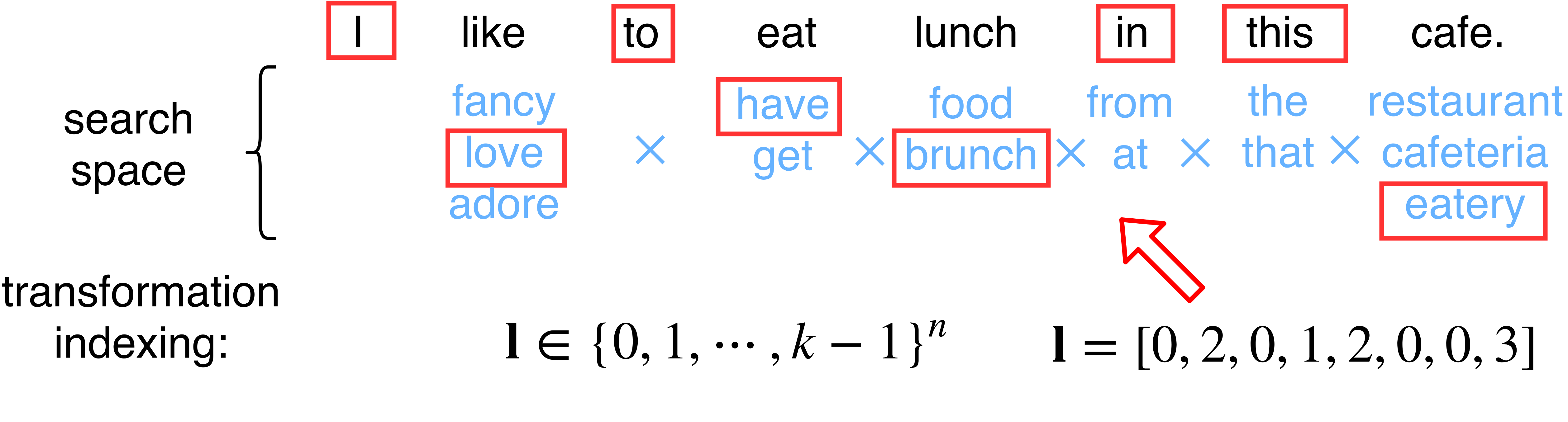}
    \captionof{figure}{An illustration of the transformation indexing when applying to a text sentence. In this example, the transformation denoted as $\bl$ modifies the original sentence to the new one shown in the red boxes. }
    \label{fig:illustration}
  \end{minipage}
  \end{minipage}

\begin{remark}
\label{remark:sentence_space}
For concrete usage, we use $w\in \Wcal$ to denote word space, and $s\in\mathcal{S}$ to denote sentences to distinguish the differences. 
\end{remark}


{\bf Embedding $V$.} The embedding layer is a key transition from discrete input data into continuous space, which could then be fed into the classifier. For text domain, we typically use the bag-of-words embedding or word-to-vector embedding.

For a bag-of-words embedding, $V: \Xcal^n \rightarrow \R^D$ represents a document as the statistics of word counts, i.e., the summation of each word's one-hot representation. Meanwhile, word-to-vector embeddings characterize different words as $D$-dimensional vectors, i.e., $V(x)\in \R^D, \forall x\in \Xcal$.
When there's no ambiguity, we also use $V: \Xcal^n \rightarrow \R^{n\times D}$ to denote the concatenation of word vectors of the input document as a list of words.

{\bf Transformation Indexing.} 
 Suppose each feature $x\in\Xcal$ has (at most) $k-1$ possible replacements, denoted by $x^{(i)}, i\in [k-1] (\equiv \{1,2,\cdots , k-1\})$. For future use, we also define $x^{(0)}=x, \forall x\in\Xcal$.\\
A valid transformation $T$ is the combined replacement of each individual feature $x_i, i\in [n]$. Therefore we index $T$ by a vector $\bl\in \{0,1,\cdots, k-1\}^n$, and $l_i$ indicates the index of each replacement $i$. Namely, $T_{\bl}(\bx=[x_1,x_2,\cdots, x_n])=[x_1^{(l_1)}, x_2^{(l_2)},\cdots, x_n^{(l_n)}]$.
An example with word replacement in the text classification environment can be found in Figure \ref{fig:illustration}.

{\bf Classifier output $C_y$.} We consider a targeted attack, i.e., we want to maximize the output probability $C$ over a specific target label $y$. 

In this paper, we use a regular lower-case symbol to denote a scalar or a single feature, and use a bold lower-case symbol for a vector or a list of features.
\subsection{Problem Setup}
\label{sec:formulation}
In most scenarios, we only allow transformations on at most $m$ features, then the constraint is $\|\bl\|_0\leq m$. Therefore we present the adversarial attack problem formally:
\begin{problem}
\label{problem:origin}
For some input data $\bx\in \Xcal^n$ and target label $y$, we try to find a feasible transformation $T_{\bl^*}$, where $\bl^*\in \{0,1,\cdots k-1\}^n$ is the index so that:
\begin{equation}
\bl^*=\argmax_{\|\bl\|_0\leq m} C_{y} \left(V\left(T_{\bl}(\bx)\right)\right).    
\end{equation}
Or similarly, we want to find the set of features to attack, i.e., 
\begin{equation}
\label{eqn:origin}
S^*=\argmax_{|S|\leq m} f(S),    
\end{equation}
where we defined the set function $f:2^{[n]}\rightarrow \R, $ $f(S)= \max_{\supp(\bl)\subset S} C_{y}(V(T_{\bl}(\bx))) $.
\end{problem}
The set function $f(S)$ represents the classifier output for the target label $y$ if we apply a set of transformations $S$.  We are therefore searching over all possible sets of up to $m$ replacements to maximize the probability of the target label output of a classifier. 

\begin{remark}
In this paper, we focus on replacements via word and sentence paraphrasing for empirical studies. However, our formulation is general enough to represent any set of discrete transformations. Possible transformations include replacement with the nearest neighbor of the gradient direction~\cite{gong2018adversarial} and word vectors~\cite{kuleshov2018adversarial}, or flipping characters within each word ~\cite{ebrahimi2017hotflip}. We will also conduct a thorough experimental comparisons among different choices.
\end{remark}

\section{Theoretical Analysis}
First, notice that the original problem is computationally intractable in general:
\begin{proposition}
\label{remark:subsetsum}
For a general classifier $C_y$, the problem \ref{problem:origin} is NP-hard. Specifically, even for some convex $C_{y}$, the problem \ref{problem:origin} can be polynomially reduced from subset sum and hence is NP-hard.
\end{proposition}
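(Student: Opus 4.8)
The plan is to exhibit a polynomial-time reduction from \textsc{Subset Sum} to a special instance of Problem~\ref{problem:origin}. Recall the \textsc{Subset Sum} instance: positive integers $a_1,\dots,a_n$ and a target $t$, and we must decide whether some $S\subseteq[n]$ satisfies $\sum_{i\in S}a_i=t$. First I would build the discrete-attack instance with $n$ binary features ($k=2$), so that each feature is either left unchanged ($l_i=0$) or replaced ($l_i=1$); choosing the attacked set is then exactly choosing $S=\{i:l_i=1\}$. I set the budget $m=n$ so the cardinality constraint $\|\bl\|_0\le m$ is vacuous and the outer maximization ranges over all subsets.

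Next I would engineer the embedding and classifier so that the attack objective is driven by the scalar $u(S)=\sum_{i\in S}a_i$. Using a bag-of-words-style embedding, the map $\bl\mapsto V(T_{\bl}(\bx))$ is affine in the indicator of $S$: I arrange a single coordinate of $V$ to equal $u(S)$ by letting the replacement of feature $i$ add weight $a_i$ to that coordinate while the original feature contributes $0$. I then take the target-label output to be governed by the convex quantity $(u(S)-t)^2$, concretely $C_y(V)=\max\{0,\,1-(\langle\bw,V\rangle-t)^2\}$, where $\bw$ selects the relevant coordinate. Since $V$ is affine in the selection and $(\langle\bw,\cdot\rangle-t)^2$ is convex, the underlying score is a legitimate convex object, and maximizing $C_y$ is equivalent to driving this convex energy to its minimum.

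With integer data the gap argument is immediate: $(u(S)-t)^2=0$ exactly when $\sum_{i\in S}a_i=t$, and otherwise $(u(S)-t)^2\ge 1$, so $C_y$ equals $1$ on a yes-instance and $0$ on every infeasible selection. Hence the optimum of Problem~\ref{problem:origin} attains value $1$ if and only if the \textsc{Subset Sum} instance is a yes-instance, and any oracle solving the attack problem decides \textsc{Subset Sum}. I would finish by checking that the construction is polynomial in $n$ and $\log(\max_i a_i+t)$ — it uses $n$ features, one relevant embedding coordinate, and the given integers — and that the stated NP-hardness for a general classifier $C_y$ follows a fortiori, since the convex family is a special case.

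The step I expect to be the main obstacle is the middle one: realizing the subset-sum test as the optimum of an objective that genuinely fits the assumed convex classifier form while still \emph{maximizing} (rather than minimizing) the target probability. The tension is that a convex function is maximized at extreme points, so the reduction must route the exact-equality test through a convex energy that the output decreases in; I would need to verify that clipping to a valid probability preserves the strict gap between yes- and no-instances and does not introduce spurious maximizers away from $u(S)=t$.
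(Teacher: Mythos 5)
Your reduction is essentially the paper's. The paper also encodes the Subset Sum integers in a single embedding coordinate --- it sets $V(x_i^{(0)})=[s_i,0,\dots,0]$ and lets every replacement zero the word out, so that choosing which words to attack selects a subset --- takes the target vector $\bv=[W,0,\dots,0]$, and decides the instance by whether the quadratic $\bigl\|\sum_i V(x_i^{(l_i)})-\bv\bigr\|_2^2$ can be driven exactly to zero. Your version selects the subset directly (originals contribute $0$, replacements add $a_i$) rather than through the complement, which is immaterial, and your integer gap argument ($(u(S)-t)^2\ge 1$ unless $u(S)=t$) is exactly the certificate the paper uses implicitly when it ``checks whether the best approximation of $\bv$ is exactly $\bv$.''

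The one substantive discrepancy is the point you flag yourself: your clipped objective $C_y(V)=\max\{0,\,1-(\langle\bw,V\rangle-t)^2\}$ is the positive part of a concave quadratic --- a bump, not a convex function --- so as written you have proved NP-hardness for a \emph{general} $C_y$ but not the ``even for some convex $C_y$'' clause of the proposition. The cleaner route, and what the paper in effect does, is to keep $C_y$ as the (convex) squared distance itself and phrase the decision question as whether the best-approximation value $0$ is attained; equivalently, work with the concave $-\|\cdot-\bv\|_2^2$ and ask whether the maximum equals $0$. The integer gap of at least $1$ survives unchanged and no clipping is needed. To be fair, the paper's own write-up is loose on exactly this point --- it displays an $\argmax$ of the squared norm while describing a best approximation, i.e.\ a minimization --- so your closing worry about maximizing a convex function is well placed; but your clipping construction trades that sign headache for a genuine convexity violation rather than resolving it, and should be repaired along the lines above if the convex case is to be claimed.
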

Details and all proofs referenced to in this paper can be found in the appendix.

\subsection{Unifying Related Methodology via Further Assumptions}
Fortunately, with further assumptions it becomes possible to solve problem \ref{problem:origin}, above, in polynomial time. Some existing heuristics are proposed to generate adversarial examples for the text classification problem. Though usually not specifically proposed in the relevant literature, we unify the underlying assumptions for these heuristics to succeed in polynomial time in this section.

One possible assumption is that the original function $C_{y}$ is smooth, which could afterwards be approximated by its first-order Taylor expansion:
\begin{eqnarray*}
C_{y}(V(T_{\bl}(\bx)))&=&C_{y}(\bv)+\langle \nabla C_{y}(\bv),V(T_{\bl}(\bx))-\bv\rangle + \mathcal{O}\left(\|V(T_{\bl}(\bx))-\bv\|_2^2\right)
\end{eqnarray*}
where $\bv=V(\bx)$.
Therefore, Problem \ref{problem:origin} can be relaxed as follows:
\begin{problem}
\label{problem:frank-wolfe}
Given gradient $\nabla C_{y}(\bv)$, where $\bv=V(\bx)$, maximize function $C_{y}$ by its first-order Taylor expansion:
\begin{equation}
\label{eqn:frank_wolfe}
\bl^*=\argmax_{\|\bl\|_0\leq m} V(T_{\bl}(\bx))^\top \nabla C_{y}(\bv).
\end{equation}
\end{problem}
Problem \ref{problem:frank-wolfe} is similar to the Frank-Wolfe method ~\cite{frank1956algorithm} in continuous optimization and is easy to solve: 
\begin{proposition}
\label{lemma:frank_wolf_P}
Problem \ref{problem:frank-wolfe} can be solved in polynomial time for both bag-of-words and word to vector embeddings. Specifically, $f(S)=\argmax_{\supp(\bl)\subset S} V(T_\bl(\bx))^\top \Delta C_y(\bv)$ can be written as $\sum_{i\in S}w_i$ for some $w$ irrelevant to $S$, where $\bv=V(\bx)$.
\end{proposition}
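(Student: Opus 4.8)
The plan is to exploit the fact that, for both embedding types, $V(T_{\bl}(\bx))$ decomposes into a sum of (respectively, a position-wise concatenation of) terms, each depending on only a single coordinate $l_i$ of $\bl$. Once this separability is established, the linearized objective inherits it through linearity of the inner product against the \emph{fixed} vector $\nabla C_y(\bv)$, and the constrained maximization collapses into an independent per-feature choice. Interpreting $f(S)$ as the optimal value of the inner (argmax) problem, this is exactly what yields the additive form $\sum_{i\in S} w_i$.

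First I would write the embedding explicitly in each case. For the bag-of-words embedding, $V(\bx)=\sum_{i=1}^n \phi(x_i)$, where $\phi(x)$ is the (one-hot) count contribution of word $x$; hence $V(T_{\bl}(\bx))=\sum_{i=1}^n \phi(x_i^{(l_i)})$. For the word-to-vector embedding, $V(T_{\bl}(\bx))$ is the concatenation whose $i$-th block is the word vector $V(x_i^{(l_i)})$. In both cases the $i$-th summand (resp.\ block) depends on $\bl$ only through $l_i$. Since $\bv=V(\bx)$ is fixed, $\nabla C_y(\bv)$ is a constant vector, and pairing against it gives
\begin{equation*}
V(T_{\bl}(\bx))^\top \nabla C_y(\bv)=\sum_{i=1}^n g_i(l_i),
\end{equation*}
with $g_i(l)=\phi(x_i^{(l)})^\top \nabla C_y(\bv)$ in the bag-of-words case and $g_i(l)=\langle V(x_i^{(l)}),\, \nabla_i C_y(\bv)\rangle$ in the word-to-vector case, where $\nabla_i$ is the gradient block at position $i$. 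The objective is thus separable across features.

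Next, for a fixed support set $S$ I would observe that every $i\notin S$ is forced to $l_i=0$, while each $i\in S$ may be optimized independently, so that
\begin{equation*}
f(S)=\sum_{i\notin S} g_i(0)+\sum_{i\in S}\max_{l} g_i(l)=\sum_{i=1}^n g_i(0)+\sum_{i\in S} w_i,\qquad w_i:=\max_{l} g_i(l)-g_i(0)\ge 0 .
\end{equation*}
The weights $w_i$ do not depend on $S$, which gives the claimed additive form up to the constant $\sum_i g_i(0)$ (irrelevant to the optimization over $S$). Solving Problem~\ref{problem:frank-wolfe} then reduces to picking the $m$ indices with largest $w_i$: computing each $w_i$ costs $O(k)$ comparisons and the top-$m$ selection costs $O(n\log n)$, so the whole procedure is polynomial.

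The argument is essentially a separability computation, so there is no deep obstacle; the one point to handle carefully is that separability is a genuine consequence of the special structure of these two embeddings rather than a generic fact. Specifically, the gradient must be frozen at the original point $\bv$ so that it contributes no $\bl$-dependence, and the embedding must be additive (bag-of-words) or block-concatenated (word-to-vector) across positions — a property that would fail for a general nonlinear embedding. I would also make explicit that $l_i=0$ is always a feasible replacement, which guarantees $w_i\ge 0$ and ensures the top-$m$ selection never prefers leaving a slot unchanged when a beneficial replacement exists.
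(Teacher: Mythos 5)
Your proof is correct and takes essentially the same route as the paper's: both exploit the positionwise separability of $V(T_{\bl}(\bx))^\top \nabla C_y(\bv)$ (a sum of one-hot contributions for bag-of-words, blockwise inner products $\langle V(x_i^{(l_i)}), \hat{\bg}_i\rangle$ against the gradient blocks for word-to-vector) to collapse the inner argmax into independent per-feature maximizations, making the set function modular and the outer problem a polynomial-time top-$m$ selection. If anything, your write-up is slightly more careful than the paper's, which sets $w_i=\max_{0\leq t\leq k-1} g_{d_{it}}$ and writes $h(S)=\mathbb{1}_{S}^\top \bw$ while silently dropping the baseline contribution $\sum_{i\notin S} g_{d_{i0}}$ of the unattacked positions; your decomposition $f(S)=\sum_{i} g_i(0)+\sum_{i\in S}\bigl(\max_{l} g_i(l)-g_i(0)\bigr)$ makes the modularity-up-to-an-$S$-independent-constant explicit and yields $w_i\geq 0$ from the feasibility of $l_i=0$.
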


Related methods like ~\cite{gong2018adversarial} are attempts to solve problem \ref{problem:frank-wolfe}. They propose to conduct transformations via replacement by synonyms chosen by \eqref{eqn:frank_wolfe}. However, activations like ReLU break the smoothness of the function, and first order Taylor expansion only cares about very local information, while embeddings for word synonyms could be actually not that close to each other. Consequently, this unnatural assumption prevents related gradient-based attacks to achieve good performance.

Besides smoothness, another more natural assumption is that $f(S)$ in the original problem \ref{problem:origin} is submodular~~\cite{narayanan1997submodular,fujishige2005submodular}. Submodular is a property that is defined for set functions, which characterizes the diminishing returns of the function value change as the size of the input set increases. 
\begin{definition}\cite{schrijver2003combinatorial}~~
If $\Omega$ is a finite set, a submodular function is a set function $ f:2^{\Omega }\rightarrow \mathbb{R}$, where $2^{\Omega }$  denotes the power set of $\Omega$ , which satisfies one of the following equivalent conditions.
\begin{enumerate}
    \item For every $X, Y \subseteq \Omega$ with $ X \subseteq Y$ and every $ x\in \Omega \setminus Y$ we have that $ f(X\cup \{x\})-f(X)\geq f(Y\cup \{x\})-f(Y)$.
\item For every $S, T \subseteq \Omega$ we have that $ f(S)+f(T)\geq f(S\cup T)+f(S\cap T)$.
\item For every $X\subseteq \Omega$ and $x_1,x_2\in \Omega\backslash X$ we have that $f(X\cup \{x_1\})+f(X\cup \{x_2\})\geq f(X\cup \{x_1,x_2\})+f(X)$.
\end{enumerate}
\end{definition}
With the design of $f(S)$ in Problem \ref{problem:origin} to be monotone non-decreasing and if we further assume $f$ to be submodular, our task becomes to maximize a monotone submodular function subject to a cardinality constraint ~\cite{nemhauser1978analysis}. Therefore, greedy method guarantees a good approximation of the optimal value of Problem \ref{problem:origin}:
\begin{claim}
\label{lemma:monotone}
In problem \ref{problem:origin}, $f$ is monotone non-decreasing. Furthermore, if the function $f$ is submodular, greedy methods achieve a $(1-1/e)$-approximation of the optimal solution in polynomial time.
\end{claim}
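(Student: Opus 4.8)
The plan is to treat the two assertions separately: first establish that $f$ is monotone non-decreasing directly from its definition, and then obtain the approximation factor by appealing to the classical guarantee for monotone submodular maximization under a cardinality constraint.

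For monotonicity, I would argue straight from the definition $f(S)=\max_{\supp(\bl)\subseteq S} C_{y}(V(T_{\bl}(\bx)))$. The key observation is that enlarging $S$ only enlarges the feasible region over which we optimize: if $S\subseteq S'$, then every $\bl$ with $\supp(\bl)\subseteq S$ also satisfies $\supp(\bl)\subseteq S'$, so $\{\bl:\supp(\bl)\subseteq S\}\subseteq\{\bl:\supp(\bl)\subseteq S'\}$. Taking the maximum of the same objective $C_{y}(V(T_{\bl}(\bx)))$ over a larger feasible set can only increase the value, hence $f(S)\le f(S')$. I would also record that the empty transformation $\bl=\mathbf{0}$, which leaves $\bx$ unchanged since $x_i^{(0)}=x_i$, is feasible for every $S$; this shows the maximum is always attained and that $f(\emptyset)=C_{y}(V(\bx))$. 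Note that this part is unconditional and uses nothing about submodularity.

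For the approximation guarantee, once $f$ is monotone non-decreasing and submodular (the latter being a hypothesis of the claim), the result is exactly the theorem of Nemhauser, Wolsey and Fisher \cite{nemhauser1978analysis}: the greedy algorithm that repeatedly adds the element of largest marginal gain, run for $m$ rounds, returns a set $S_g$ with $f(S_g)\ge(1-1/e)\,\mathrm{OPT}$ whenever $f(\emptyset)=0$. Since each greedy step requires at most $n$ marginal-gain evaluations and we take $m\le n$ steps, the overall running time is polynomial given oracle access to $f$. The one bookkeeping point is normalization: because $f(\emptyset)=C_{y}(V(\bx))$ need not vanish, I would apply the theorem to the shifted function $g(S)=f(S)-f(\emptyset)$, which remains monotone submodular with $g(\emptyset)=0$ and whose greedy trajectory coincides with that of $f$; the guarantee then reads $f(S_g)-f(\emptyset)\ge(1-1/e)(\mathrm{OPT}-f(\emptyset))$.

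I expect no genuine obstacle in this claim. The monotonicity argument is essentially immediate from the $\max$-over-support form of $f$, and the approximation factor is a direct citation of a known result, so the only care needed is in stating the normalization cleanly. The real difficulty of the overall program — verifying that $f$ is actually submodular for the concrete word-level CNN and one-dimensional RNN classifiers — is handled by the structural theorems elsewhere in the paper and is deliberately not invoked here, where submodularity is assumed outright.
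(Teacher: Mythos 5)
Your proposal is correct and takes essentially the same route as the paper: monotonicity is established by observing that $S\subseteq S'$ nests the feasible sets $\{\bl:\supp(\bl)\subseteq S\}\subseteq\{\bl:\supp(\bl)\subseteq S'\}$, and the $(1-1/e)$ factor is obtained by direct citation of the classical monotone-submodular-maximization guarantee of \cite{nemhauser1978analysis}. Your normalization remark --- that $f(\emptyset)=C_{y}(V(\bx))$ need not vanish, so the guarantee properly applies to $g(S)=f(S)-f(\emptyset)$ and reads $f(S_g)-f(\emptyset)\geq(1-1/e)\left(\mathrm{OPT}-f(\emptyset)\right)$ --- is a point the paper's one-line appeal to the theorem silently glosses over, and it makes your write-up slightly more precise than the original without changing the approach.
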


Both our work and the optimization scheme from ~\cite{kuleshov2018adversarial} propose some variants of greedy methods with the underlying submodular assumption.  

The greedy method proposed in ~\cite{kuleshov2018adversarial} selects candidate replacements directly by function value, one word at a time, which we will refer as the objective-guided greedy method. We will propose a more efficient yet comparable effective greedy method that is guided by the gradient magnitude in Section \ref{sec:GGGM}, and compare with the above two methods in Section \ref{sec:empirical}. As an extension from the continuous optimization, our method uses the well-studied Gauss-Southwell rule~~\cite{nutini2015coordinate} that is provably better than random selection. In each iteration, we determine and select the most important words by the gradient norm of words' embeddings, and then find the greediest transformation within the search space of the selected words. The advantage is that we are able to conduct multiple replacements in one iteration and thus take into consideration the joint effect of multiple words replacements.
We will introduce our method, which we call Gradient-Guided Greedy Word Paraphrasing in Algorithm \ref{alg:word}, and will show empirical performance comparison with the (objective-guided) greedy method ~\cite{kuleshov2018adversarial} and the gradient method used in ~\cite{gong2018adversarial} in Section \ref{sec:empirical}. 

\subsection{Submodular Neural Networks on the Set of Attacks}
To argue that submodular is a natural assumption, we study and summarize the neural networks are submodular on the set of attacks. 

In \cite{bilmes2017deep}, it provides a class of submodular functions used in the deep learning community called deep submodular functions.
Nevertheless the deep submodular functions are not necessarily applicable to our set function. We hereby formally prove the following two kinds of neural networks, that are ubiquitously used for text classification, indeed satisfy submodular property on the set of attacks under some conditions.

\begin{figure}
    \centering
    \includegraphics[width=0.65\columnwidth]{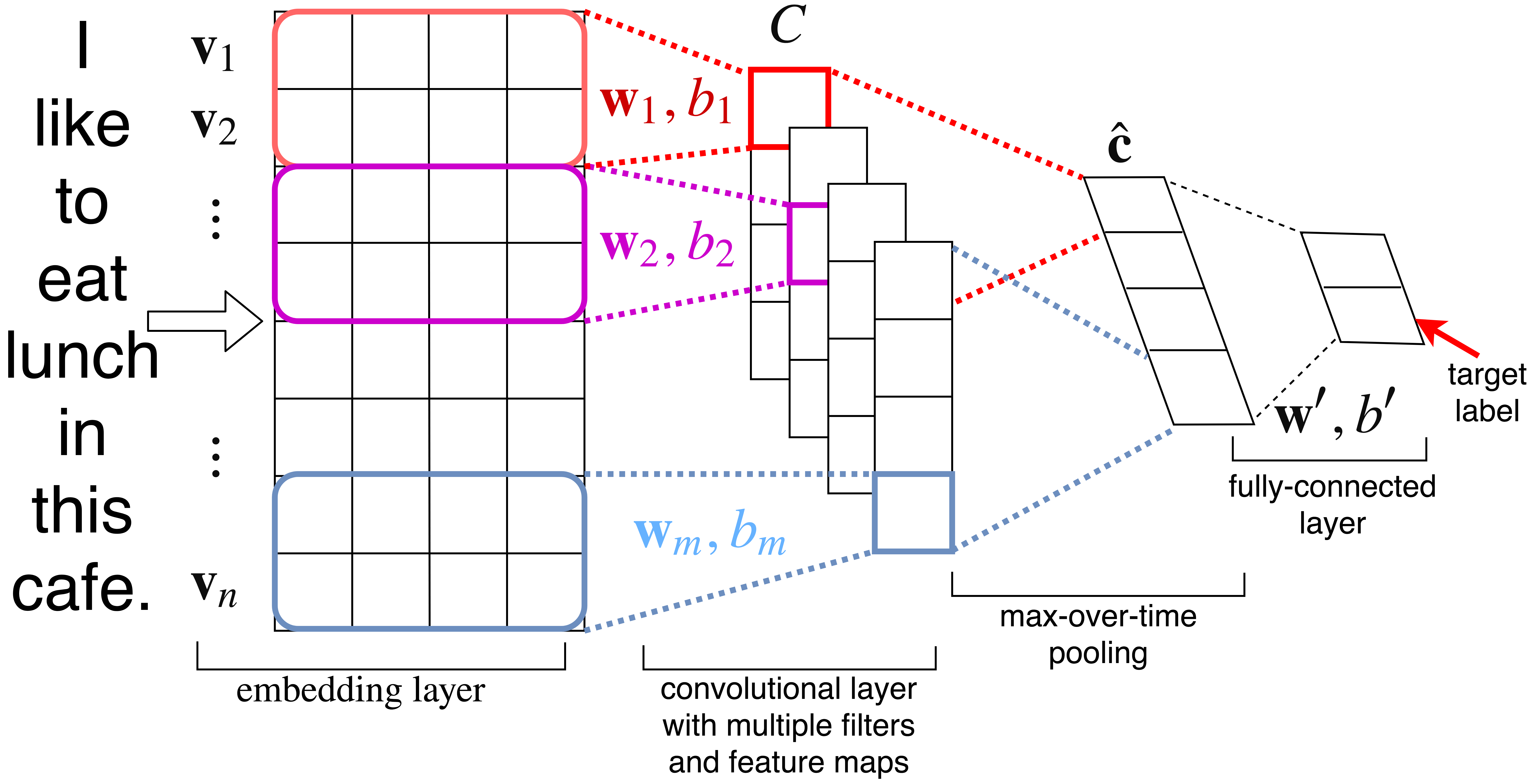}
    \caption{Model architecture of simplified W-CNN for an example sentence.}
    \label{fig:my_label}
\end{figure}

\subsubsection{Simplified W-CNN ~\cite{kim2014convolutional}}
Denote the stride as $s$, the number of grams (window size) $h$, and the word vector of the $i$-th word in a document as $\bv_i$($\equiv V(x_i)$). Then the output for the convolutional layer is a matrix $C=[c_{ij}]_{i\in[n/s],j\in [m]}$ from $n$ words and $m$ filters:
$$c_{ij}=\phi(\bw_j^\top \bv_{s(i-1)+1:s(i-1)+h}+b_j),  ~i=1,2,\cdots n/s,$$
where $\bw_j\in \R^{D h}$ is the $j$-th filter, $b_j$ is the corresponding bias term and $\phi$ is the non-linear, and non-decreasing activation such as ReLU, tanh and sigmoid function. $\bv_{i:j}$ denotes the concatenation of word vectors in the window of words $i$ through $j$, namely $[\bv_i^\top,\bv_{i+1}^\top,\cdots \bv_{j}^\top]^\top \in \R^{D (j-i+1)}$. Each filter $\bw_j$ is applied to individual windows of words to produce a feature map
$\bc^j = [c_{1j},c_{2j},\cdots c_{n/s,j}]^\top$.

Afterwards, a max-over-time pooling is applied to each feature map to form the penultimate
layer $\hat{\bc}=[\hat{c}_1,\hat{c}_2,\cdots \hat{c}_m]$, where $\hat{c}_i$ is the largest value in $\bc^j$:
$$\hat{c}_j=\max_{i}c_{ij}.$$

Compared to the original ~\cite{kim2014convolutional} paper, we only omit the dropout and softmax layer, and instead consider the following WCNN classifier output for a target label:
\begin{equation}
C^{\text{WCNN}}(\bv_{1:n})=\bw'\cdot \hat{\bc}+b'
\label{eqn:wcnn}
\end{equation}

\begin{theorem}
\label{remark:cnn_submodular}
We consider the simple version of W-CNN classifier described in \eqref{eqn:wcnn}, and suppose there's no overlapping between each window, i.e., $s\geq h$, and $\bw'$ has all non-negative values. If further we only look at transformations that will increase the output, i.e., $\bw_j^\top V(x_i^{(t)})\geq \bw_j^\top V(x_i), \forall i\in [n], j\in [m], t\in [k-1]$,
then $f^{\text{WCNN}}(S)=\max_{\supp(\bl)\in S} C^{\text{WCNN}}(V(T_{\bl}(x)))$ is submodular. 
\end{theorem}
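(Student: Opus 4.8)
The plan is to exploit the non-overlapping assumption $s\ge h$ to decompose the attack windowwise. Because the windows are disjoint, every position $i\in[n]$ lies in exactly one window; write $W(i)$ for that window and, for a window $W$ and filter $j$, let $z_{Wj}(\bl)$ denote the pre-activation $\bw_j^\top(\text{window embedding})+b_j$. Since a replacement at position $i$ touches only the embedding block of word $i$ inside the single window $W(i)$, the map $\bl\mapsto z_{Wj}(\bl)$ is additive (modular) in the replaced positions of $W$, and by the ``increase the output'' hypothesis together with the monotonicity of $\phi$ each individual replacement weakly raises every $z_{Wj}$. First I would record the consequence already implicit in Claim \ref{lemma:monotone}: $f^{\text{WCNN}}$ is monotone non-decreasing, and---because every coordinate of $\bw'$ is non-negative and $\phi$ is non-decreasing---the inner maximization over $\bl$ with $\supp(\bl)\subseteq S$ can be carried out window by window.

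Next, because $\phi$ is non-decreasing I would push the pooling through the activation, writing $\hat c_j(S)=\phi\big(\max_W z^\star_{Wj}(S)\big)$, where $z^\star_{Wj}(S)$ is the best modular value attainable on $S\cap W$. Then $f^{\text{WCNN}}(S)=b'+\sum_j w'_j\,\hat c_j(S)$, and since submodularity is closed under non-negative linear combinations it suffices to prove that each $S\mapsto\phi\big(\max_W z^\star_{Wj}(S)\big)$ is submodular. I would verify this through the diminishing-returns form (condition~1 of the definition), splitting on where the two relevant positions sit: when the added element and the enlarging elements lie in distinct windows the participating $z^\star_{Wj}$'s depend on disjoint coordinates, so the marginal gains decouple cleanly and the inequality is immediate; when a single window contains only one replaceable position the argument is also clean. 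The effort concentrates in the same-window case, where I would lean on the modularity of $z^\star_{Wj}$ inside $W$ and the monotone behaviour of $\phi$.

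The step I expect to be the main obstacle is precisely the max-pooling. The pooled quantity $\max_W z^\star_{Wj}(S)$ is a maximum of modular functions, and a maximum of modular functions is in general \emph{not} submodular, so this is exactly where the proof must do genuine work rather than invoke a closure property. The delicate point is that within a single window several positions can be replaced simultaneously, so a window that currently sits below the pooling maximum can later be pushed above it by an additional replacement; this is what threatens the diminishing-returns inequality, and it is why the argument cannot avoid both window-disjointness (guaranteeing that only one window's modular value moves per added element) and the assumption that every replacement increases every filter. I would therefore isolate a self-contained lemma asserting that, under these two hypotheses, $S\mapsto\phi\big(\max_W z^\star_{Wj}(S)\big)$ satisfies condition~1, treat the same-window sub-case with an explicit four-term comparison, and only then reassemble the non-negative combination over $j$ together with the constant $b'$ to conclude.
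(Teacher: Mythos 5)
Your skeleton---disjoint windows, modular pre-activations $z_{Wj}$, pushing the non-decreasing $\phi$ through the pooling, monotonicity from the increase-only hypothesis, and closing under the non-negative combination $\sum_j w'_j \hat c_j + b'$---is exactly the paper's route. The paper executes it in detail for $h=1$: there each window holds a single word, so the marginal of adding position $s$ with replacement $t$ to a label vector $\bl^S$ is $\hat c_j(\bl^S+t\be_s)-\hat c_j(\bl^S)=\max\{v^{(t)}_{sj}-\hat c_j(\bl^S),\,0\}$, where the candidate value $v^{(t)}_{sj}=\phi(\bw_j^\top V(x_s^{(t)})+b_j)$ is \emph{independent of} $\bl^S$; since $\hat c_j(\bl^S)\leq \hat c_j(\bl^T)$ for consistent $\bl^S,\bl^T$ with $S\subseteq T$, diminishing returns follow, and non-negative summation over filters finishes the argument. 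Your ``distinct windows'' case is this same computation.

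The genuine gap is the lemma you defer to: the claim that, under window-disjointness plus increase-only replacements, $S\mapsto\phi\bigl(\max_W z^\star_{Wj}(S)\bigr)$ is submodular even when one window contains several replaceable positions is \emph{false}, so the planned ``four-term comparison'' in the same-window sub-case cannot be completed. Concretely: take $n=4$, $h=s=2$, a single filter, $\phi=\mathrm{ReLU}$, $w'=1$, $b'=0$; let window $\{1,2\}$ have base pre-activation $0$ and window $\{3,4\}$ have $10$; let the replacement at position $1$ add $20$ to window 1's pre-activation, the replacement at position $2$ add $5$, and positions $3,4$ have no effective replacements. All hypotheses of the theorem hold, yet $f(\emptyset)=10$, $f(\{2\})=10$, $f(\{1\})=20$, $f(\{1,2\})=25$, so $f(\{1\})+f(\{2\})=30<35=f(\{1,2\})+f(\emptyset)$: adding element $1$ lifts window 1 from far below the pooling maximum to above it and thereby \emph{creates} marginal value for element $2$---exactly the failure mode you flagged as the main obstacle, but it is a genuine counterexample rather than a difficulty to be worked around. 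For $h\geq 2$ the candidate entering the pooling is $\phi(z_W(\bl^S)+\delta)$, which grows with $\bl^S$, and the antitone comparison that drives the $h=1$ proof breaks. You should know that the paper itself shares this gap: its detailed argument covers only $h=1$, and its one-sentence extension (``simply replace $\bv_1$ by $\bv_{1:h}$'') is valid only if each window contains at most one attackable position, or if whole windows are taken as the attack units. Restricting your lemma to that setting is the correct salvage; as stated for general $h\leq s$, it would fail.
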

The proof sketch is as follows. Every coordinate in $\hat{\bc}$ is a combination of max pooling over a modular function and is therefore submodular. And finally sums of submodular functions is still submodular. 

Besides word-level CNN, another network that is popular in the NLP community is the recurrent neural network (RNN) or its variants. We will show that under some conditions, RNN satisfies submodular property.

\subsubsection{Recurrent Neural Network with One-dimensional Hidden Units}
Consider a RNN with $T$ time steps and each hidden layer is a single node. Then for all $t\leq T$, given the value of a previous hidden state $h_{t-1}\in\R$ and an input word vector $\bv_{t-1}\in\R^{D}$ ($\bv_{t}\equiv V(x_t)$),
RNN computes the next hidden state $h_{t}$ and output vector $\bo_{t}\in\R$ as:
\begin{align}
\label{eqn:rnn}
  h_t &= \phi(wh_{t-1}+ \bm^\top\bv_{t-1}+b) 
\end{align}
The classifier output is $C^{\text{RNN}}(\bv_{1:T})=yh_T$.

\begin{theorem}
\label{remark:rnn_submodular}
For a recurrent neural network with $T$ time
steps and one-dimensional hidden nodes described in \eqref{eqn:rnn}, if $w$ and $y$ are positive, and the activation is a non-decreasing concave function, then $f^{\text{RNN}}(S)=\max_{\supp(\bl)\in S} C^{\text{RNN}}(V(T_{\bl}(\bx)))$ is submodular. 
\end{theorem}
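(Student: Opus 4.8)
The plan is to exploit the monotone, layered structure of the recurrence to first collapse the inner maximization and then to reduce submodularity to one-dimensional concavity. Write $a_t = \bm^\top \bv_t$ for the input contribution of the word at step $t$; a transformation $\bl$ influences the final hidden state $h_T$ only through the profile $(a_1,\dots,a_T)$. Since $y>0$, maximizing $C^{\text{RNN}}=yh_T$ is the same as maximizing $h_T$ up to the positive factor $y$, and positive scaling preserves submodularity, so it suffices to prove that $F(S):=\max_{\supp(\bl)\subset S} h_T(\bl)$ is submodular.

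First I would establish coordinatewise monotonicity: because $w>0$ and $\phi$ is non-decreasing, increasing any single $a_t$ weakly increases $h_t$, and this propagates through every later step, so $h_T$ is non-decreasing in each $a_t$. Since the feasible replacements factor as a product over words (for $t\notin S$ we must use $a_t^{(0)}$, while for $t\in S$ we may pick any $a_t^{(r)}$, including $r=0$), the inner maximum decouples: the optimum takes $a_t^\ast=\max_r a_t^{(r)}$ for $t\in S$ and $a_t^{(0)}$ otherwise. Crucially the gain $g_t:=a_t^\ast-a_t^{(0)}\ge 0$ automatically, because ``no replacement'' is always available, so no extra monotonicity hypothesis on the replacements is needed here (unlike in the W-CNN theorem). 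Thus $F(S)=H\big(\mathbf{u}+\sum_{t\in S} g_t\be_t\big)$, where $H$ maps an input profile to $h_T$ via the recurrence and $\mathbf{u}=(a_1^{(0)},\dots,a_T^{(0)})$.

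It then remains to verify the diminishing-returns inequality (condition 3 of the definition): for a base set $X$ and two words $i<j$ (ordered by time, without loss of generality), I must show $F(X\cup\{i\})+F(X\cup\{j\})\ge F(X\cup\{i,j\})+F(X)$. The key observation is that, with all other coordinates fixed according to $X$, both $a_i$ and $a_j$ affect $h_T$ only through the single hidden state $h_j$: for steps after $j$ the recurrence sees these inputs only via $h_j$. Hence $h_T=G(h_j)$, where the downstream map $G$ is a composition of layers $h\mapsto\phi(wh+\text{const})$ and is therefore concave and non-decreasing (each layer is concave since $\phi$ is concave and $w>0$, and composition of concave non-decreasing maps preserves both; if $j=T$ then $G$ is the identity). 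Writing $h_j=\phi(wh_{j-1}+a_j+b)$ and noting that $h_{j-1}$ is non-decreasing in $a_i$ but independent of $a_j$, I express $h_T=\Psi\big(q(a_i)+a_j+b\big)$ with $\Psi:=G\circ\phi$ concave non-decreasing and $q(a_i):=wh_{j-1}$ non-decreasing. Substituting the four set values, the required inequality becomes exactly $\Psi(u+\delta)+\Psi(u+\beta)\ge \Psi(u+\delta+\beta)+\Psi(u)$ for the nonnegative increments $\delta=q(a_i^\ast)-q(a_i^{(0)})$ and $\beta=g_j$, which is precisely the diminishing-returns property of the concave scalar function $\Psi$.

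The main obstacle, and the step I would spend the most care on, is the reduction in the last paragraph: arguing cleanly that the two perturbed coordinates funnel through the lone hidden state $h_j$, and that the downstream composition is genuinely concave and non-decreasing, so that everything collapses to scalar concavity. This framing has the pleasant side effect of avoiding any smoothness assumption on $\phi$: I never differentiate, so an arbitrary concave non-decreasing activation is handled directly, and the already-upgraded coordinates in $X$ only change the fixed constants inside $G$ and inside $q(\cdot)$, leaving the argument intact. The remaining bookkeeping, namely checking that the decoupling in the second paragraph is valid because the feasible set is a product and that $i<j$ is without loss of generality, is routine.
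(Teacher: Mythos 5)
Your proof is correct, but it takes a genuinely different route from the paper's. The paper keeps the inner maximization over transformation indices alive throughout: its key tool is Lemma \ref{lemma:support}, which states that a fixed increment $\delta$ injected into an intermediate hidden state has a smaller downstream effect when the network is attacked on a larger support, $f_{i:j}(h_i+\delta,\bl)-f_{i:j}(h_i,\bl)\geq f_{i:j}(h_i+\delta,\bl+t\be_s)-f_{i:j}(h_i,\bl+t\be_s)$; this is proved via concavity of the downstream composition, extended by induction to arbitrary nested supports $S\subset U$, and then combined with the outer maxima over $t$ and $\bl$ to verify the nested-sets characterization (condition 1) of submodularity. You instead eliminate the maximization at the outset: since $h_T$ is coordinatewise non-decreasing in the input contributions and the feasible set is a product over words, $f^{\text{RNN}}(S)$ collapses to $H\bigl(\mathbf{u}+\sum_{t\in S}g_t\be_t\bigr)$ with nonnegative gains $g_t$, and you then verify the pairwise characterization (condition 3) by funneling both perturbed coordinates through the single hidden state $h_j$, reducing everything to the diminishing-increments inequality $\Psi(u+\delta)+\Psi(u+\beta)\geq\Psi(u+\delta+\beta)+\Psi(u)$ for one concave non-decreasing scalar map $\Psi=G\circ\phi$. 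Both arguments hinge on the identical core fact---the downstream map $h\mapsto h_T$ is concave and non-decreasing as a composition of concave non-decreasing layers---but your decoupling step buys two things the paper handles only implicitly: it makes rigorous the exchange of max and difference in the paper's final chain (the step writing $f(S\cup\{s\})-f(S)$ as $\max_{\supp(\bl^S)\subset S}\bigl(\max_t yf_{0:T}(C,\bl^S+t\be_s)-yf_{0:T}(C,\bl^S)\bigr)$ silently uses that a single coordinatewise-optimal $\bl$ maximizes both terms, which is exactly your monotone-decoupling observation), and it turns the paper's ``without loss of generality assume $v_i^{(j)}\geq v_i^{(0)}$'' into an automatic consequence of the no-replacement option $r=0$ being feasible. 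What the paper's route buys in exchange is a stronger, reusable intermediate statement: Lemma \ref{lemma:support} quantifies the diminishing effect of a perturbation at any intermediate layer under any nested pair of attack sets, which is the conceptual heart of the theorem, whereas your argument needs, and establishes, only the two-coordinate case.
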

This result is quite surprising, since the word vectors influence the network's output on different time steps and are by no means separable.
In the proof, we first show that a same amount of change induced on an intermediate layer has a diminishing effect when the network is attacked on more features. Then together with the concavity and non-decreasing property of the network, we are able to finish the proof.

\begin{algorithm*}[bht]
\caption{$Joint~Sentence~And~Word~Paraphrasing(C_{y},\bx^{(0)}, P, \delta, \lambda_s,\lambda_w,\delta_s,\delta_w,\tau,k) $}
\begin{algorithmic}[1]
\STATE {\bfseries Input:} Classifier $C$ associated with target label $y$, input document $\bx^{(0)}$, language model $P$ trained on the training set, syntactic threshold $\delta$, sentence and word paraphrasing ratio $\lambda_s, \lambda_w$, termination threshold $\tau$, WMD threshold $\delta_s, \delta_w$, limit number of paraphrases $k$. 
\STATE Conduct sentence separation $\bx^{(0)}\rightarrow [s_1, s_2,\cdots s_l], s_i\in \mathcal{S}, 1\leq i\leq l$. (See Remark \ref{remark:sentence_space}). 
\STATE Create sentence neighboring set $\mathbf{S}=\{S_1, S_2, \cdots S_l\}$, where each $S_i\subset \mathcal{S}$ satisfies that $|S_i|\leq k$ and $WMD(s_i, s)\leq \delta_s,\forall s\in S_i$.
\STATE $\bx^{(1)}\leftarrow Greedy~Sentence~Paraphrasing(C_{y}, \bx^{(0)}, \mathbf{S}, \lambda_s, \tau)$ in Alg. \ref{alg:sentence}.
\STATE {\bfseries If} $C_{y}(V(\bx))\geq \tau$ {\bfseries Return} $\bx^{(1)}$ 
\STATE Conduct word separation $\bx^{(1)}\rightarrow [w_1, w_2,\cdots w_n], w_i\in \Wcal, 1\leq i\leq n$. 
\STATE Create word neighboring set $\mathbf{W}=\{W_1, W_2, \cdots W_n\}$, where each $W_i\subset \Wcal$  satisfies that $|W_i|\leq k$ and $WMD(w_i, w)\leq \delta_w, |P(\bx^{(1)})-P(\bx'(w))|\leq \delta, \forall w\in W_i$, where $\bx'(w)$ is text $\bx^{(1)}$ in which $w_i$ is  substituted by $w$.
\STATE $\bx^{(2)}\leftarrow$ $Gradient~Guided~Greedy~Word~Paraphrasing(C_{y}, \bx^{(1)},\mathbf{W}, \lambda_w, \tau)$ in Alg. \ref{alg:word}.
\STATE {\bfseries Return} $\bx^{(2)}$
\end{algorithmic} 
\label{alg:joint}
\end{algorithm*}

\begin{algorithm*}[hbt]
\caption{$Greedy~Sentence~Paraphrasing(C_{y}, \bx, \mathbf{S}, \lambda_s, \tau) $}
\begin{algorithmic}[1]
\STATE {\bfseries Input:} Document $\bx$ as list of sentences $[s_1,s_2,\cdots,s_l]$, sentence neighboring sets $\mathbf{S}=\{S_1,S_2\cdots S_n \} $, model $C_{y}$ and parameters $\lambda_s, \tau$.
\WHILE{$C_{y}(V(\bx))\leq \tau$ and number of sentence paraphrased $\leq \lambda_s l$}
\STATE Create candidate set $M=\emptyset $
\FOR{$j=1,2,\cdots, l$}
\FOR{$s \in S_j$}
\STATE Substitute $s_j$ by $s$ to get $x'$ and add it to the candidate set $M\leftarrow M\cup \{\bx'\}$.
\ENDFOR
\STATE $\bx\leftarrow \argmax_{\bx'\in M} C_{y}(V(x')) $
\ENDFOR
\ENDWHILE
\end{algorithmic}
\label{alg:sentence}  
\end{algorithm*}

\section{Adversarial Text Examples via Paraphrasing}
In order to conduct adversarial attacks on models with discrete input data like text, one essential challenge is how to select suitable candidate replacements so that the generated text is both semantic meaning preserving and syntactically valid. Another key issue is how to develop an efficient yet effective optimization scheme to find good transformations. 
To solve the above two issues, we propose our methodology for generating adversarial examples for text.

\subsection{Joint Sentence and Word Paraphrasing}

To coincide with the definition of adversarial examples for text, we first determine appropriate word and sentence paraphrasing methods that maintain the semantic meaning of the original text. Our scheme is to generate an initial set for word and sentence replacements with a well-studied paraphrasing corpus and then filter out discrepant choices based on their semantic and syntactic similarities to the original text. A similar mechanism was also used by \cite{kuleshov2018adversarial} to generate word replacement candidates.

\begin{algorithm*}
\caption{$Gradient~Guided~Greedy~Word~Paraphrasing(C_{y}, \bx, \mathbf{W}, \lambda_w, \tau) $ }
\begin{algorithmic}[1]
\STATE {\bfseries Input}: Document $x$ as a list of words $[w_1,w_2,\cdots, w_n]$, word neighboring sets $\mathbf{W}=\{W_1,W_2\cdots W_n \} $, model $C_{y}$ and parameters $\lambda_w, \tau$.
\STATE Let $N$ (that we set as 5) be the number of words to replace at most in each iteration  
\WHILE{$C_{y}(\bx)\leq \tau$ and number of words paraphrased $\leq \lambda_w n$}
\STATE Compute score for each word $\bp$: $p_i=\|\nabla_i C_{y}(\bv) \|_2$, where $\bv=V(\bx)$ and $\nabla_i$ denotes the gradient with respect to the embedding of the $i$-th word in $x$. 
\STATE Get the indices $I=\{i_1,i_2,\cdots i_N \}$: the $N$ largest indices in $\bp$.
\STATE Create candidate set $M=\{\bx\} $
\FOR{$j\in I$}
\STATE Let the new candidate set $\bar{M}\leftarrow \emptyset$
\FOR{$\bar{\bx} \in M$}
\FOR{$w \in W_j$}
\STATE Substitute the $j$-th word in $\bar{\bx}$ by $w$ to get $\bx'$ and add it to the candidate set $\bar{M}\leftarrow \bar{M}\cup \{\bx'\}$.
\ENDFOR
\ENDFOR
\STATE $M\leftarrow M\cup \bar{M}$
\ENDFOR
\STATE $\bx\leftarrow \argmax_{\bx'\in M} C_{y}(\bx') $
\ENDWHILE
\end{algorithmic}
\label{alg:word}  
\end{algorithm*}

\textbf{Paraphrasing Corpus.} \\
For word paraphrasing, we use the Paragram-SL999 ~\cite{wieting2015paraphrase} of 300 dimensional paragram embeddings to generate neighboring paraphrasing for words. For sentences, we use the pretrained model from Wieting and Gimpel's Para-nmt-50m project ~\cite{wieting-17-millions} to generate sentence paraphrases.

We further specify semantic and syntactic constraints to ensure good quality in adversarial texts: 

{\bf Semantic similarity. } \\
We use the Word Mover Distance (WMD) \cite{kusner2015word} to
measure semantic dissimilarity. For sentence pairs, WMD captures the minimum total semantic distance that the embedded words of one sentence need to “travel” to the embedded words of another sentence. While for words, WMD directly measures the distance between their embeddings.

{\bf Syntactic similarity. }\\
Alongside the semantic constraint, one should also ensure that the generated sentence is fluent and natural. We make use of a language model as in \cite{kuleshov2018adversarial}, $P: \Xcal^n \rightarrow [0,1]$ to calculate the probability of the adversarial sentence, and require: 
$$|\ln(P(\bx))-\ln(P(\bx'))|\leq \delta, $$
where $\bx'$ is the adversarial sentence paraphrased from $\bx$.


In Algorithm \ref{alg:joint}, we present the whole procedure of finding the neighboring sets to conduct our proposal joint sentence and word paraphrasing attack. While with more details, we show how to use the objective value as well as gradient information to guide the search in Algorithm \ref{alg:sentence} (for sentences) and \ref{alg:word} (for words). 

\subsection{Gradient-Guided Greedy Method}
\label{sec:GGGM}
In Section \ref{sec:formulation} we have demonstrated the difficulty of finding the best transformation from combinatorially many choices. 
Here we specify our proposal, gradient-guided greedy word paraphrasing, as shown in Algorithm \ref{alg:word}. We can see that we first use gradient values to determine the index set of $N$ words ($w_{i_1},w_{i_2},\cdots w_{i_N}$) that we want to replace (steps 4-5). Then in steps 7-15 we create a candidate set of all possible transformations in $W_{i_1}\times \cdots \times W_{i_N}$. Finally, we choose the best paraphrase combinations within the candidate set. In this way, we are able to conduct multiple replacements in one iteration and thus take into consideration the joint effect of multiple words replacements. \\
This method is based on an intuition derived from coordinate descent with the Gauss-Southwell rule~\cite{nutini2015coordinate} in the continuous optimization theory; normally, updating the coordinates with the highest absolute gradient values is provably faster than optimizing over random coordinates~\cite{lei2016coordinate,lei2017doubly}. We only conduct this method in word paraphrasing, since the gradient information of sentence embedding is less trustworthy. Usually sentence paraphrasing changes the number of words. The calculated gradient before paraphrasing step might not even correspond to the right position of the new sentence. Therefore it makes more sense to use the objective value only and goes back to our Algorithm \ref{alg:sentence}.


\section{Experiments}
In this section, we provide empirical evidence of the advantages of our attack scheme via joint sentence and word paraphrasing on both two WCNN and LSTM models and various classification tasks. Our
code for replicating our experiments is available online\footnote{\url{https://github.com/cecilialeiqi/adversarial_text}}.

\subsection{Tasks and Models.}
\label{sec:setting}
We focus on attacking the following state-of-the-art models which also echo our theoretical analysis:
\begin{itemize}
    \item \textbf{Word-level Convolutional Network (WCNN).}\\
We implement a convolutional neural network ~\cite{kim2014convolutional} with a temporal convolutional layer of kernel size 3 and a max-pooling layer, followed by a fully connected layer for the classification output. 
\item \textbf{Long Short Term Memory classifier (LSTM).}\\
The LSTM Classifier~\cite{hochreiter1997long} is well-suited to classifying text sequences of various lengths. We construct a one-layer LSTM with 512 hidden nodes, following the architecture used in ~\cite{kuleshov2018adversarial,DBLP:journals/corr/ZhangZL15}.
\end{itemize}
We carried out experiments on three different text classification tasks: fake-news detection, spam filtering and sentiment analysis; these tasks are also considered in ~\cite{kuleshov2018adversarial}. The corresponding datasets include: 
\begin{itemize}
    \item \textbf{Fake/Real News.}\\
The fake news repository ~\cite{McIntire2017Fake} contains 6336 clean articles of both fake and real news in a 1:1 ratio (5336 training and 1000 testing), with both left- and right-wing sites as sources.
\item \textbf{Trec07p (emails).}\\
The TREC 2007 Public
Spam Corpus (Trec07p) contains 75,419 messages of ham (non-spam) and spam in a 1:2 ratio. We preprocess the data and retain only the main content in each email. We randomly hold out 10\% as testing data.
\item \textbf{Yelp reviews.}\\
The Yelp reviews dataset was obtained from the Yelp Dataset Challenge in 2015. The polarity dataset we used was constructed for a binary classification task that labels 1 star as negative and 5 star as positive. The dataset contains 560,000 training and 38,000 testing documents. 
\end{itemize}

\subsection{General Settings}
For the training procedure, we use similar settings for the WCNN and LSTM classifier. We extracted the top 100,000 most frequent words to form the vocabulary. The first layer of both WCNN and LSTM is the embedding that transforms individual word into a 300-dimensional vector using the pretrained $word2vec$ embeddings ~\cite{mikolov2013efficient}. 
We randomly hold out 10\% training data as validation set to choose the number of epochs and use a constant mini-batch size of 16.  

We manually selected the hyperparameters for each dataset. We set the termination threshold $\tau=0.7$, and set a neighbor size $k$ for possible paraphrases to be 15. We set the semantic similarity $\delta_w=\delta_s=0.75$\footnote{We use the WMD similarity in python's spacy package. The similarity is in [0,1] basis where 1 means identical and 0 means complete irrelevant.} for all datasets and syntactic bound $\delta_2=2$ for news and yelp datasets, and $\delta=\infty$ for Trec07p; the email dataset contains many corrupted words rendering the language model ineffective. For all datasets, we only allow $\lambda_w=20\%$ word paraphrasing. We set the sentence paraphrasing ratio $\lambda_s=20\%$ for yelp and news dataset, and for spam $\lambda_s=60\%$.  

\begin{table*}[htb]
    \centering
    \begin{tabular}{c|c|c|c|c|c|c|c|c}
    \toprule
        \multirow{2}{*}{Dataset} & \multicolumn{4}{c|}{WCNN}  & \multicolumn{4}{|c}{LSTM} \\
        		\hhline{~--------}
        & Origin & ADV (ours) & \multicolumn{2}{c|}{ADV ~\cite{kuleshov2018adversarial}} & Origin & ADV (ours) & \multicolumn{2}{|c}{ ADV~\cite{kuleshov2018adversarial}} \\
        \midrule
        News & 93.1\% & {\bf 35.4\%} & 71.0\% & 70.5\%* & 93.3\% & {\bf 16.5\%} & 37.0\% & 22.8\%*  \\
        \hline
        Trec07p & 99.1\% & {\bf 48.6\%} & 64.5\% & 63.5\%* & 99.7\% & {\bf 31.1\%} & 39.8\% & 37.6\%*  \\
        \hline
        Yelp & 93.6\% & {\bf 23.1\%} & 39.0\% & 41.2\%* & 96.4\% & 30.0\% & {\bf 24.0\%} &  29.2\%* \\
        \bottomrule
    \end{tabular}
\caption{Classifier accuracy on each dataset. Origin and ADV respectively stand for the clean and adversarial testing results. For all datasets, we set word paraphrasing ratio to be $\lambda_w=20\%$ for our method (ADV(ours)). We include results from \cite{kuleshov2018adversarial} for comparison. The first column indicates reported values in their paper; while the consequent column marked by asterisk is our implementation using greedy method in ~\cite{kuleshov2018adversarial} and the same word neighboring set as our method. Both results use large $\lambda_w=50\%$ and allow many more word replacements. }
 \label{tab:accuracy_compare} 
\end{table*}
\subsection{Accuracy comparisons.}
\label{sec:empirical}
After setting up the experimental environment, we now present the empirical studies in several aspects. 
In Table \ref{tab:accuracy_compare} we present the original and adversarial test accuracy on the three datasets with the two chosen models, where we allow $20\%$ word replacements. We also include the presented adversarial accuracy from ~\cite{kuleshov2018adversarial} for reference. Since the word neighboring sets for the two methods are different and the values are not directly comparable, one might argue that we have broaden the search space of words to make the problem easier. Therefore we also implemented the greedy mechanism in \cite{kuleshov2018adversarial} using the same word replacement set as our method has chosen (marked by $*$). Both the reported values from \cite{kuleshov2018adversarial} and our implementation allow $50\%$ word replacements. From Table \ref{tab:accuracy_compare} we can see that in both settings, we are able to successfully flip more prediction classes with fewer word paraphrases. We hereby conclude that joint sentence and word level paraphrasing is much more effective than mere word replacements. Meanwhile, since sentence-level attacks almost perfectly preserve the original meaning, our method can be less susceptible to humans. In the appendix we use some concrete examples to show the significantly improved quality of our generated adversarial texts compared to ~\cite{kuleshov2018adversarial,gong2018adversarial}.\footnote{Since the former code is not available online, we implemented their algorithms. We use their chosen parameters to generate the adversarial examples to compare the quality of sentences in the appendix. While in Section \ref{sec:GGGM} we use the same word neighboring sets for all algorithms to make a fair comparison of the optimization schemes.} In the examples, we can see that sometimes by simplifying or changing the language, or even by making the slightest changes like adding or erasing space, the sentence paraphrase can make a tremendous difference to the classifier output. Consequently, our method does far fewer word level alterations than other methods and greatly reduces the possibility of syntactic or grammar errors.

\begin{figure*}
    \centering
\begin{tabular}{ccc}
    \includegraphics[width=0.34\linewidth,height=0.31\linewidth]{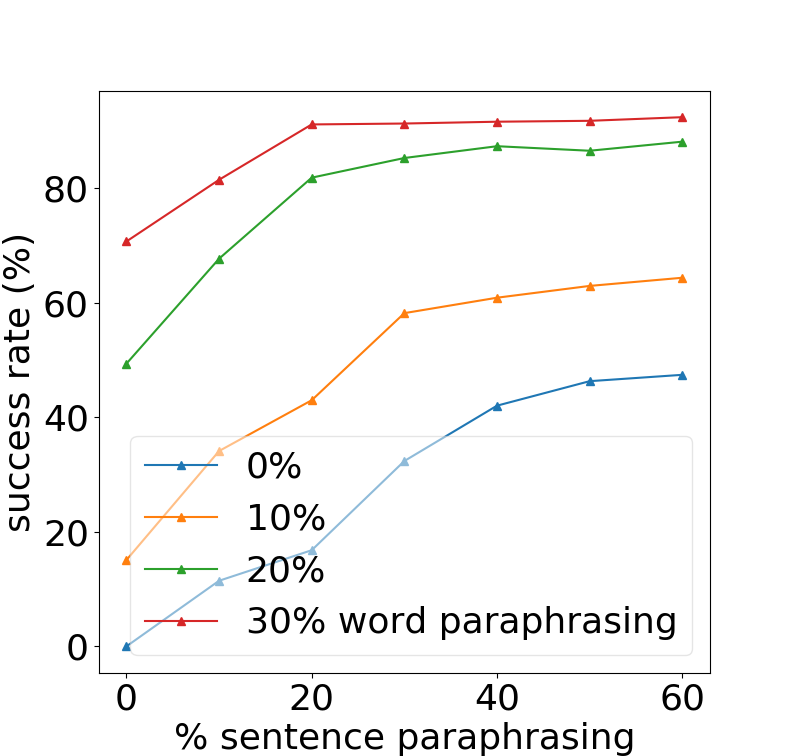} \hspace{-0.8cm}  &     \includegraphics[width=0.34\linewidth,height=0.31\linewidth]{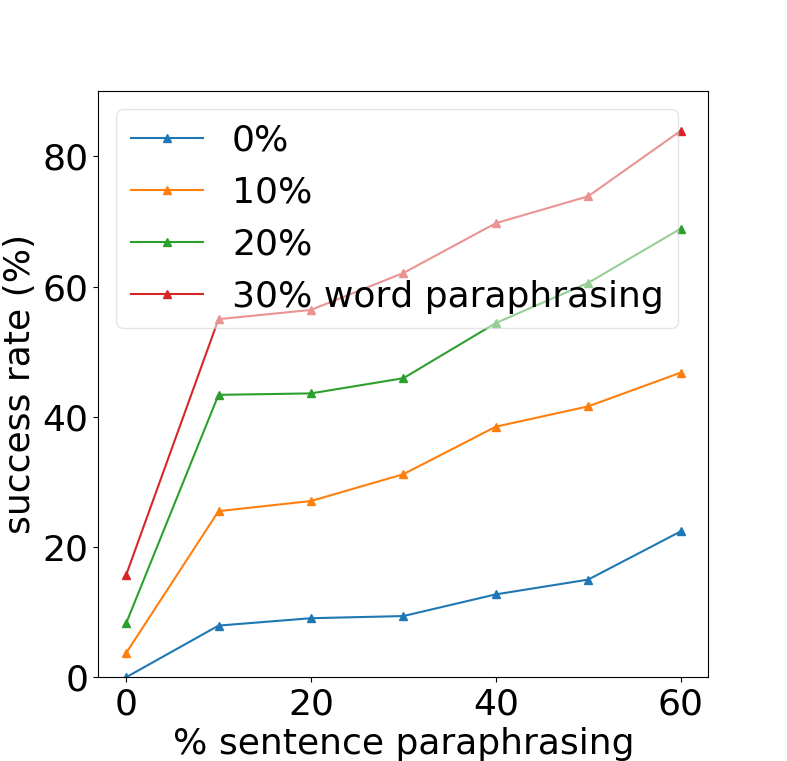} \hspace{-0.8cm}  &     \includegraphics[width=0.34\linewidth,height=0.31\linewidth]{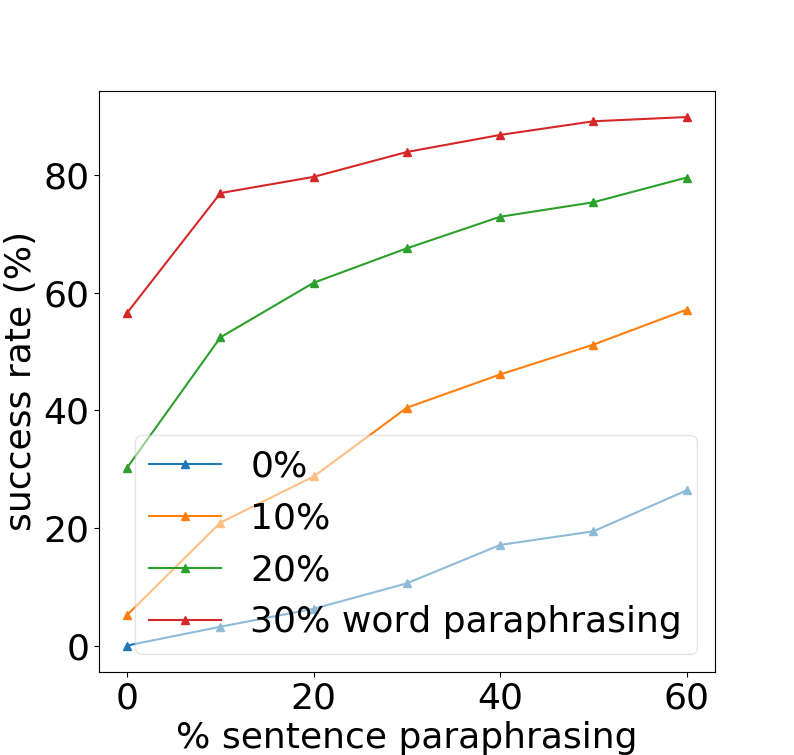}   \\
    News & Trec07p & Yelp
\end{tabular}
        \caption{Success rate of attacking the LSTM classifier with different ratios of allowed paraphrasing.}
    \label{fig:success_rate}
\end{figure*}

To further investigate the joint effect from combining sentence and word level attacks, we also study how each model is susceptible to different degrees of change permitted for both attack levels. 
Therefore we tested and presented the joint influence in Figure \ref{fig:success_rate} for ratios of sentence paraphrasing $\lambda_s$ ranging from 0\% to 60\%, as well as for allowed word paraphrasing percentages $\lambda_w$: 0\%, 10\%, 20\% and 30\%. In all datasets, sentence paraphrasing is especially effective when we allow only a few word paraphrases. For instance, in the sentiment analysis task, we could only successfully attack around 5\% reviews by paraphrasing 10\% of words. But after conducting 60\% sentence paraphrasing beforehand, the success rate increases to almost 60\%.

\begin{table*}[htb]
    \centering
    \begin{tabular}{c|r c|c|c|c|c|c}
    \toprule
\multirow{2}{*}{Method}  & \multicolumn{3}{|c|}{objective-guided greedy \cite{kuleshov2018adversarial} } & \multicolumn{2}{|c|}{gradient method \cite{gong2018adversarial}} & \multicolumn{2}{|c}{ ours (Alg. \ref{alg:word})}\\
\hhline{~-------}
& & $\lambda_w=5\%$ & $\lambda_w=20\%$ & $\lambda_w=5\%$ & $\lambda_w=20\%$ & $\lambda_w=5\% $ & $\lambda_w=20\%$  \\
    \midrule
\multirow{2}{*}{News} & {\em SR:} & 26.2\% & 28.4\% & 9.93\% & 12.8\% & \textbf{39.7\%} & \textbf{45.4\%} \\
& {\em time:} & 0.79 & 1.46 & \textbf{0.13} & \textbf{0.21} & 0.26 & 0.31 \\
\hline
\multirow{2}{*}{Trec07p} &{\em SR:} & 5.1\% & 24.9\% & 0.86\% & 3.4\% & {\bf 12.9 \%} & {\bf 45.3 \%} \\
&{\em time:} & 0.19 & 0.33 & {\bf 0.03} & {\bf 0.05} & 0.07 & 0.09 \\
\hline
\multirow{2}{*}{Yelp} & {\em SR:} & 12.7\% & 45.0\% & 4.2\% & 9.1\% & {\bf 20.7\%} & { \bf 55.9\%} \\
& {\em time:} & 0.15 & 0.21 & {\bf 0.02} & {\bf 0.03} & {\bf 0.02} & 0.05 \\
\bottomrule
    \end{tabular}
    \caption{Attack success rate (denoted by SR) and time comparisons of each optimization mechanism. The performance is reported on the WCNN classifier. Here objective-guided greedy indicates the greedy method used in ~\cite{kuleshov2018adversarial}, and the gradient method is the one suggested in ~\cite{gong2018adversarial}. We can see that even when only applying Algorithm \ref{alg:word}, our optimization method is more effective among others. }
\label{tab:method_compare}
\end{table*}
\subsection{Optimization Method Comparisons for Word-level Attacks.}
To investigate the effectiveness of our proposed gradient-guided greedy method, we implement and compare the time consumption and success rate with Algorithm \ref{alg:word} and the other two techniques: the gradient method ~\cite{gong2018adversarial} and the objective-guided greedy method ~\cite{kuleshov2018adversarial}. To make a fair comparisons of the optimization schemes, we do not conduct sentence level paraphrasing in any of the methods, and we use the same hyperparameters and settings as suggested in Section \ref{sec:setting}. We observe that our scheme is especially more appealing to WCNN, partially because we used 5\% dropout for inference. Recent work~\cite{gal2016dropout} indicates dropout not only works for training but also for inference as a Bayesian approximation.
The small alteration of one word replacement per iteration \cite{kuleshov2018adversarial} is not significant enough to be considered as true gains or the noise from the dropout. While our method replaces 5 words per iteration to capture more difference, thus it is easier to distinguish the change from the dropout randomness. 
From Table \ref{tab:method_compare} we can see that our method requires only $1/5$ to $1/3$ time cost relative to the objective-guided greedy method and also achieves better success rate. On the other hand, gradient method fails to produce good performance when we allow a small set of word replacements. 

\begin{table*}[hbt]
    \centering   
    \begin{tabular}{c|c|c|c|c|c|c}
    \toprule
        \multirow{2}{*}{Dataset} 
        & \multicolumn{3}{c|}{Task I}  
        & \multicolumn{3}{|c}{Task II} \\
        			\hhline{~------}
        & News & Trec07p &Yelp  & News &Trec07p & Yelp \\
        \midrule
        Original & 70.0\% & 80.0\%& 100.0\%& 3.06 $\pm$ 0.67 & 3.23 $\pm$ 0.31 & 1.93 $\pm$ 0.55 \\
        Adversarial & 50.0\% & 80.0\% & 100.0\% & 3.13 $\pm$ 0.50 & 3.10 $\pm$ 0.40 & 2.10 $\pm$ 1.05 \\
        \bottomrule
    \end{tabular}
        \caption{Human-subject validation. Task I measures classification accuracy while Task II the subjective likelihood that each example was crafted by a human (scale from 1 to 5). We used five participants, each shown $n=60$ text examples, half original and half generated using our algorithm. The quality of the generated adversarial text (Task II) is near equal to the original and in fact, slightly higher for the Yelp dataset, but this finding is not necessarily statistically significant. } 
    \label{tab:adv_human_evaluation}
\end{table*}
\subsection{Human Evaluation Validation}
Despite the significantly higher attack proportion of our text examples, our aim is to deliver a message that is faithful to and coherent with the original text. To evaluate the quality of these generated text examples, we presented a number of original and adversarial text pairs (randomly shuffled before the test) to five human evaluators. The evaluators were asked to complete two tasks: I) Assign the correct label to each text sample; II) Rate each text sample with respect the the likelihood that was crafted by a human (scale from 1 to 5). We adopted a majority vote for task I, and averaged the results from five evaluators for task II. As shown in Table \ref{tab:adv_human_evaluation}, we found that human evaluators tend to achieve similar performance for each kind of text in both tasks, indicating that text examples generated via joint sentence and word paraphrasing are indeed coherent and faithful to the original texts in the relevant respects. 

\begin{table*}[htb]
    \centering   
    \begin{tabular}{c|c|c|c|c|c|c}
    \toprule
        \multirow{2}{*}{Dataset} & \multicolumn{3}{c|}{LSTM}  & \multicolumn{3}{|c}{ WCNN} \\
        			\hhline{~------}
        & News & Trec07p &Yelp  & News &Trec07p & Yelp \\
        \midrule
        Test (before) & 93.3\% & 99.7\%& 96.4\%& 93.1\%& 99.1\%& 93.6\%\\
        Test (after) & 94.5\% & 99.5\% & 97.3\% & 93.8\% & 99.2\% & 94.9\% \\
        \midrule
     ADV (before) & 16.5\% & 31.1\% & 30.0\% & 35.4\% & 48.6\% & 23.1\% \\
     ADV (after) & 32.7\% & 50.1\% & 46.7\% & 40.0\% & 54.2\% & 44.4\% \\
        \bottomrule
    \end{tabular}
        \caption{Performance of adversarial training. } 
        \label{tab:adv_train}    
\end{table*}

\subsection{Adversarial Training.}
Finally, we investigated whether our adversarial examples can help improve model robustness. For each dataset, we randomly selected 20\% of the training data and generated adversarial examples from them using Algorithm \ref{alg:joint}. We then merged these adversarial examples with corrected labels into the training set and retrained the model. We present the testing and adversarial accuracy before and after this adversarial training process in Table \ref{tab:adv_train}. Under almost all circumstances, adversarial training improved the generalization of the model and made it less susceptible to attack. 

\section{Conclusion}
In this paper, we propose a general framework for discrete adversarial attacks. Mathematically, we formulate the adversarial attack as an optimization task on a set of attacks. We then theoretically prove that greedy method guarantees a $1-1/e$ approximation factor for two classes of neural network for text classification task. Empirically, we propose a gradient-guided greedy method that inherits the efficiency of gradient method and ability to attack of greedy method. Specifically, we investigate joint sentence and word paraphrasing to generate attacking space that maintain the original semantics and syntax for text adversarial examples.

\paragraph{Acknowledgements.} I.D. acknowledges the support of NSF via
IIS-1546452 and CCF-1564000. A.D. acknowledges the
support of NSF Grants 1618689, DMS 1723052, CCF 1763702, ARO YIP W911NF-14-1-0258 and research gifts by Google, Western Digital and NVIDIA.



\bibliographystyle{IEEEtran}
\bibliography{IEEEabrv,rethink_textadv}

\appendix

\section{Proofs}
\label{sec:proof}
\subsection{Proof of Proposition \ref{remark:subsetsum} }
\begin{proof}[Proof of Proposition \ref{remark:subsetsum}]
	
	We will show that even for a very simple function $f$, it could be reduced from subset sum problem when $k\geq 2$. 
	
	For instance, let $$f(S)=\argmax_{\supp(\bl)\subset S } \left\|\sum_{i=1}^n V(x_i^{(l_i)}) -\bv \right\|_2^2,$$ where the target is to find the best $\ell_2$ approximation of some target vector $\bv$ from the embedding vectors. 
	
	For simplicity, denote the embedding vector of each paraphrased words to be $V(x_i^{(j)})=\bv_i^{(j)}, 1\leq i\leq n, 0\leq j\leq k-1$.
	Suppose there is an algorithm that solves the above problem in time polynomial to $n$. Then we will now show that the subset sum problem has a solution in polynomial time. Let the $n$ numbers to be $s_1,s_2,\cdots s_n$, and the target to be $W$. Then we let $\bv_i^{(0)}=[s_i,0,0,\cdots,0]$, and $\bv_i^{(j)}=\mathbf{0},j=1,\cdots k-1$, with target $\bv=[W,0,0,\cdots,0]$. Then just check if the best approximation of $\bv$ is exactly $\bv$ will suffice the subset sum problem. Therefore it contradicts with the fact that subset sum is in NP-complete class.
\end{proof}

\subsection{Proof of Proposition \ref{lemma:frank_wolf_P} }
\begin{proof}[Proof of Proposition \ref{lemma:frank_wolf_P}]
Define set function $h(S)= \argmax_{\supp(\bl)\subset S} V(T_{\bl}(\bx))^\top \nabla C_{y}(\bv)$, where $\bv=V(\bx)$. Denote $\bg=\nabla C_{y}(\bv)$. 
	When $V$ is bag-of-words embedding, we denote the embedding of each paraphrased word in $\bx=[x_1,x_2,\cdots,x_n]$ as $V(x_i^{(j)}) =\be_{d_{ij}}$. Here for any $i$, $\be_i$ is defined as the one-hot vector with 1 in index $i$ and 0 elsewhere.
	Then  $V(T_{\bl}(\bx))=\sum_{i=1}^n \be_{d_{il_i}}$.
	
	\begin{align*}
	h(S) &= \argmax_{\supp(\bl)\subset S} V(T_{\bl}(\bx))^\top \bg    \\
	&= \argmax_{\supp(\bl)\subset S} \sum_{i=1}^n \be_{d_{il_i}}^\top \bg \\
	&= \argmax_{\supp(\bl)\subset S} \sum_{i=1}^n g_{d_{il_i}}\\
	&= \mathbb{1}_{S}^\top \bw,  
	\end{align*}
	where $w_i=\max_{0\leq t\leq k-1} g_{d_{it}} $. 
	
	When $V$ is $d$-dimentional {\em word2vec} embedding, the embedding $V(\bx)=[V(x_1)^\top |V(x_2)^\top |\cdots|V(x_n)^\top ]^\top \in \R^{nd}$. Denote $\hat{\bg}_i=\bg_{(id-d+1):id}$ to be the gradient with respect to the word $w_i$.
	
	\begin{align*}
	h(S) &= \argmax_{\supp(\bl)\subset S} V(T_{\bl}(\bx))^\top \bg    \\
	&= \argmax_{\supp(\bl)\subset S} \sum_{i=1}^n V(x_i^{(l_i)})^\top \hat{\bg}_i \\
	&= \mathbb{1}_{S}^\top \bw,  
	\end{align*}
	where $w_i=\max_{0\leq t\leq k-1} V(x_i^{(t)})^\top \hat{\bg}_i $. Therefore for both bag-of-words embedding and {\em word2vec} embedding, $h$ is a modular (linear) set function, and Problem \ref{problem:frank-wolfe} is solvable in polynomial time. 	
\end{proof}

\subsection{Proof of Claim \ref{lemma:monotone} }
\begin{proof}[Proof of Claim \ref{lemma:monotone}]
	Clearly for any $S\subset V\subset [n]$,
	\begin{align*}
	f(S)&= \max_{\supp(\bl)\subset S} C_{y}(V(T_{\bl}(\bx))) \leq \max_{\supp(\bl)\subset V} C_{y}(V(T_{\bl}(\bx))) \shortrnote{(since $S\subset T$)}\\
	&= f(V)
	\end{align*}
	Therefore the set function $f$ is non-decreasing. Since the problem of maximizing a monotone submodular function subject to a cardinality constraint admits a $1-1/e$ approximation algorithm\cite{nemhauser1978analysis}, Problem \ref{problem:origin} can be solved in time polynomial to $n$ with greedy method.
\end{proof}

\subsection{Proof of Theorem \ref{remark:cnn_submodular} }
\begin{proof}[Proof of Theorem \ref{remark:cnn_submodular}]
	We start from a simple case, $h=1$, i.e., a unit kernel size, and we look at a single feature corresponding to one filter, i.e. $\hat{c}_j=\max_{i=1}^n c_{ij}$. 
	
	To further incorporate the transformation to the input, we rewrite $\hat{c}_j$ as a function of the transformation index $\bl$.
	\begin{eqnarray*}
		\hat{c}_j(\bl)&\equiv& \max_{i=1}^{n} \phi( \bw_j^\top V(x_i^{(l_i)}) + b_j)= \max_{i=1}^n v_{ij}^{(l_i)},
	\end{eqnarray*}
	where $\bw_j$ is the $j$-th filter and we denote $v_{ij}^{(k)} = \phi(\bw_j^\top V(x_i^{(k)}) + b_j)$ for simplicity. 
	
	Let $S, T$ denote two sets that satisfy $S\subset T \subset [n]$.
	For any two vectors $\bl^S$ and $\bl^T$ satisfy that 
	$l^S_i=l^T_i, \forall i\in S$, and $\supp(\bl^S)=S, \supp(\bl^T)=T$. 
	With the assumption that $\bw_j^\top V(x_i)\leq \bw_j^\top V(x_i^{(t)})$, and since the activation function is non-decreasing, we have $v_{ij}^{(0)}\leq v_{ij}^{(t)}, \forall i\in [n],j\in[m], t\in [k-1]$, and hereby $\hat{c}_j(\bl^S)\leq \hat{c}_j(\bl^T)$.
	
	Therefore for any new element's position $s$ and its replacement index $t$, we have 
	\begin{align*} 
	&\hat{c}_j(\bl^S+t\be_s) - \hat{c}_j(\bl^S) = \max\{ v_{sj}^{(t)} - \hat{c}_j(\bl^S),0\} \\
	\geq& \max\{ v_{sj}^{(t)} - \hat{c}_j(\bl^T),0\} \shortrnote{(since $\hat{c}_j(\bl^S)\leq \hat{c}_j(\bl^T)$)} \\
	=& \hat{c}_j(\bl^T+t\be_s)- \hat{c}_j(\bl^T).
	\end{align*}
	
	Since the final output probability is a positive weighted summation of each $\hat{c}_j$, it also satisfies
	\begin{eqnarray}
	\nonumber
	&& C^{\text{WCNN}}(\bl^S+t\be_s) -C^{\text{WCNN}}(\bl^S)	 \geq  C^{\text{WCNN}}(\bl^T+t\be_s) -C^{\text{WCNN}}(\bl^T)
	\label{eqn:cnnST}
	\end{eqnarray} 
	Taking the max over all $\bl^S$, $\bl^T$ we have:
	$$f(S+\{s\}) =  \max_{t=1}^{k-1}\max_{\supp(\bl^S)=S}  C^{\text{WCNN}}(\bl^S+t\be_s)$$
	Therefore
	\begin{align*}
		&f(S+\{s\})-f(S)\\
		=& \max_{t=1}^{k-1}\left\{ \max_{\supp(\bl^S)=S}  \left\{C^{\text{WCNN}}(\bl^S+te_s) - C^{\text{WCNN}}(\bl^S)\right\} \right\}\\
		\geq &\max_{t=1}^{k-1} \left\{ \max_{\supp(\bl^T)=T} \left\{ C^{\text{WCNN}}(\bl^T+te_s) - C^{\text{WCNN}}(\bl^T)\right\} \right\} \shortrnote{(from \eqref{eqn:cnnST})}  \\
		=& f(T+\{s\}) -f(T).
	\end{align*}
	The case when $2\leq h\leq s$ is essentially the same with $h=1$ since each window has no overlapping. We could simply replace $\bv_1$ by $\bv_{1:h}$ and conduct the same analysis. 
\end{proof}

\subsection{Proof of Theorem \ref{remark:rnn_submodular}}
\begin{proof}[Proof of Theorem \ref{remark:rnn_submodular}]
	Recall that the hidden state node $h_i$ is defined recursively as: 
	\begin{align*}
h_0 &= C, \shortrnote{($C$ is constant)} \\
h_i &= \phi(wh_{i-1}+ \bm^\top V(x_{i-1})+b). 
\end{align*}
	And the classifier output is $C^{\text{RNN}}(V(\bx))=yh_T$.
	
	For simplicity, we denote $v_i^{(j)} \equiv \bm^\top V(x_{i}^{(j)})+b$. Since we will only look for the transformation that maximizes the classifier output, without loss of generality, we assume $v_i^{(j)}\geq v_i^{(0)}, \forall i\in [T], j\in [k-1]$.
	
	For a fixed input $\bx=[x_1,x_2,\cdots,x_T]$ and transformation index $\bl$, we want to study how changing an intermediate hidden state affects the consecutive layers' output. Therefore we represent the value of a $j$-th hidden state as a function of the $i$-th hidden node and the transformation label $\bl$, that captures the network from $i$-th through $j$-th time steps, i.e.,
	$$ f_{i:j}(h_i,\bl) = \phi\left(w\cdots \phi(wh_i+v_i^{(l_i)})+\cdots +v_{j-1}^{(l_{j-1})}\right). $$
	
	Finally we want to study the whole network's output $yf_{0:T}(C,\bl)$.
	We first prove the following lemma:
	\begin{lemma}
		\label{lemma:support}
		\begin{eqnarray}
		&&f_{i:j}(h_i+\delta,\bl)-f_{i:j}(h_i,\bl)\geq f_{i:j}(h_i+\delta,\bl+t\be_s)-f_{i:j}(h_i,\bl+t\be_s),
		\label{eqn:support}
		\end{eqnarray}
		for any $0\leq i<j\leq T, t\in [k-1], s\in [T], s\notin \supp(\bl), \delta>0$. 
	\end{lemma}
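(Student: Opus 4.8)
The plan is to prove inequality \eqref{eqn:support} by induction on the number of time steps $j-i$, after first recording an auxiliary property of the partial network. Concretely, I would first show that for every fixed label vector $\bl$ the map $h\mapsto f_{i:j}(h,\bl)$ is non-decreasing and concave in its first argument. This follows by a short induction on $j-i$ using the identity $f_{i:j}(h,\bl)=f_{i+1:j}(\phi(wh+v_i^{(l_i)}),\bl)$: the innermost update $h\mapsto \phi(wh+v^{(l)})$ is non-decreasing and concave because $\phi$ is non-decreasing and concave and $w>0$, and composing the (inductively) non-decreasing concave map $f_{i+1:j}(\cdot,\bl)$ on the outside preserves both properties, since a non-decreasing concave function of a concave function is concave.

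Next I would isolate the elementary fact that does all the work: if $F$ is non-decreasing and concave, $x'\geq x$, and $0\leq \epsilon'\leq \epsilon$, then $F(x+\epsilon)-F(x)\geq F(x'+\epsilon')-F(x')$. This is immediate from concavity (the increment $u\mapsto F(u+\epsilon')-F(u)$ is non-increasing, so raising the base point from $x$ to $x'$ only shrinks it) followed by monotonicity (enlarging the increment from $\epsilon'$ to $\epsilon$ only grows it). Taking $F=\phi$ already settles the base case $j=i+1$: when $s\neq i$ both sides of \eqref{eqn:support} coincide, and when $s=i$ the claim is exactly $\phi(w(h_i+\delta)+v_i^{(0)})-\phi(wh_i+v_i^{(0)})\geq \phi(w(h_i+\delta)+v_i^{(t)})-\phi(wh_i+v_i^{(t)})$, which holds because $v_i^{(t)}\geq v_i^{(0)}$ by the normalizing assumption and $\phi$ is concave.

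For the inductive step I would peel off the first time step via $f_{i:j}(h_i,\bl)=f_{i+1:j}(\phi(wh_i+v_i^{(l_i)}),\bl)$ and split on the location of the new attack. If $s>i$, the first update $\phi(wh_i+v_i^{(l_i)})$ is identical under $\bl$ and $\bl+t\be_s$; setting $\delta'=\phi(w(h_i+\delta)+v_i^{(l_i)})-\phi(wh_i+v_i^{(l_i)})\geq 0$, inequality \eqref{eqn:support} for $f_{i:j}$ becomes exactly the inductive hypothesis for $f_{i+1:j}$ at base state $\phi(wh_i+v_i^{(l_i)})$, perturbation $\delta'$, and attack $s$. If instead $s=i$, then $l_i=0$ on the left and the label becomes $t$ on the right, so I would apply the elementary fact with $F=f_{i+1:j}(\cdot,\bl)$ (non-decreasing and concave by the auxiliary property), $x=\phi(wh_i+v_i^{(0)})$, $x'=\phi(wh_i+v_i^{(t)})$, $\epsilon=\phi(w(h_i+\delta)+v_i^{(0)})-x$, and $\epsilon'=\phi(w(h_i+\delta)+v_i^{(t)})-x'$. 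Here $x'\geq x$ since $v_i^{(t)}\geq v_i^{(0)}$ and $\phi$ is non-decreasing, while $\epsilon'\leq\epsilon$ since $\phi$ is concave and the base point for $\epsilon'$ is the larger one; the elementary fact then yields \eqref{eqn:support}.

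The main obstacle I expect is the case $s=i$: unlike $s>i$, it cannot be reduced directly to the inductive hypothesis, and one must combine two separate uses of concavity, one for the first-layer update through $\phi$ (controlling the orderings $x'\geq x$ and $\epsilon'\leq\epsilon$) and one for the remaining network $F=f_{i+1:j}(\cdot,\bl)$. The delicate part is the bookkeeping of which quantity is the larger base point and which is the larger increment, and this is precisely where the auxiliary concavity of the \emph{whole} partial network, not merely of $\phi$, becomes indispensable.
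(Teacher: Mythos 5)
Your proposal is correct and is essentially the paper's own argument: the paper likewise reduces everything to the attacked coordinate $s$, writing $f_{i:j}(h_i,\bl+t\be_s)=f_{s+1:j}\big(\phi(w f_{i:s}(h_i,\bl)+v_s^{(t)}),\bl\big)$, and then applies exactly your two concavity steps --- concavity of $\phi$ at layer $s$ to get the increment comparison $a(\delta,t)-a(0,t)\leq a(\delta,0)-a(0,0)$, followed by monotonicity and concavity of the downstream map $f_{s+1:j}(\cdot,\bl)$, which is precisely your elementary fact with $F=f_{s+1:j}(\cdot,\bl)$, $x=a(0,0)$, $x'=a(0,t)$. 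Your induction on $j-i$ merely replaces the paper's one-shot transport of $\delta$ through the prefix $f_{i:s}$ with repeated applications of your $s>i$ case, and your auxiliary claim that each partial network is non-decreasing and concave makes explicit a composition property the paper asserts without proof.
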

	\begin{proof}[Proof of Lemma \ref{lemma:support}]
		\begin{eqnarray*}
			&&f_{i:j}(h_i+\delta,\bl+t\be_s)\\
			&=&f_{s+1:j}(\phi(wf_{i:s}(h_i+\delta,\bl+t\be_s)+v_s^{(t)}),\bl+t\be_s)\\
			&=&f_{s+1:j}(\phi(wf_{i:s}(h_i+\delta,\bl)+v_s^{(t)}),\bl)
		\end{eqnarray*}
		Now we simplify the equation by define $a(\delta,t)=\phi(wf_{i:s}(h_i+\delta,\bl)+v_s^{(t)}), \delta \in \R, t\in [k-1]$. Therefore we could rewrite the four terms in Eqn. \eqref{eqn:support} as:
		$f_{s+1:j}(a(\delta,0),\bl)$,
		$f_{s+1:j}(a(0,0),\bl)$,
		$f_{s+1:j}(a(\delta,t),\bl)$,
		and $f_{s+1:j}(a(0,t),\bl)$.
		
		Since $\phi$ is concave and $v^{(t)}_s\geq v^{(0)}_s$, notice 
		\begin{equation}
		a(\delta,t)-a(0,t)\leq a(\delta,0)-a(0,0).   \label{eqn:a}
		\end{equation}
		Now since $f_{s+1:j}(\cdot,\bl)$ is a composite of concave function and is also concave, we have:
		\begin{align}
		\nonumber
		&f_{i:j}(h_i+\delta,\bl+t\be_s)-f_{i:s}(h_i,\bl+t\be_s)\\
		\nonumber
		=&f_{s+1:j}(a(\delta,t),\bl)-f_{s+1:j}(a(0,t),\bl)\\
		\nonumber
		\leq& f_{s+1:j}(a(\delta,0)+a(0,t)-a(0,0),\bl)-f_{s+1:j}(a(0,t),\bl) \longrnote{(from \eqref{eqn:a} and non-decreasing $f_{s+1:j}(\cdot,\bl)$)}\\
		\nonumber
		=&f_{s+1:j}(a(\delta,0)+(a(0,t)-a(0,0)),\bl)-f_{s+1:j}(a(0,0)+ (a(0,t)-a(0,0)),\bl)\\
		\nonumber
		\leq& f_{s+1:j}(a(\delta,0),\bl)-f_{s+1:j}(a(0,0),\bl) \shortrnote{(from concavity of $f_{s+1:j}(\cdot,\bl)$)}\\
		=& f_{i:j}(h_i+\delta,\bl)-f_{i:s}(h_i,\bl)
		\label{eqn:insidelemma}
		\end{align}
		\end{proof}
		Lemma \ref{lemma:support} could be extended to a more general form. Suppose two indices $\bl^S$ and $\bl^U$ satisfy $\supp(\bl^S)=S, \supp(\bl^U)=U, S\subset U$, and $l^S_i=l^U_i,\forall i\in S$. Since we could write $\bl^U$ as $\bl^S+\sum_{i\in U\backslash S} l^U_i\be_i$, by repeatedly using Lemma \ref{lemma:support} we have:
		\begin{eqnarray}
		\nonumber
		&&f_{i:j}(h_i+\delta,\bl^S)-f_{i:j}(h_i,\bl^S)
		\geq f_{i:j}(h_i+\delta,\bl^U)-f_{i:j}(h_i,\bl^U).
		\label{eqn:support2}
		\end{eqnarray}
		This conclusion basically claims an increase into an intermediate layer of the network will have smaller effect to the output when the network is attacked on more time steps. Then back to Theorem \ref{remark:rnn_submodular}. Now consider we add a coordinate $s$ to the set $S$ and $U$, $s\notin \supp(S)\cup \supp(U)$.
		\begin{align}
		\nonumber
		& f_{0:T}(C,\bl^S+t\be_s)-f_{0:T}(C,\bl^S)    \\
		\nonumber
		= &f_{s:T}(\phi(w f_{0:s-1}(C,\bl^S)+v_s^{(t)}),\bl^S)-f_{s:T}(\phi(w f_{0:s-1}(C,\bl^S)+v_s^{(0)}),\bl^S)\\
		\nonumber
		\geq& f_{s:T}(\phi(w f_{0:s-1}(C,\bl^S)+v_s^{(t)}),\bl^U) -f_{s:T}(\phi(w f_{0:s-1}(C,\bl^S)+v_s^{(0)}),\bl^U) \longrnote{(from \eqref{eqn:support2} and since $\phi$ is non-decreasing, $v_s^{(t)}\geq v_s^{(0)}$)}\\
		\nonumber
		\geq& f_{s:T}(\phi(w f_{0:s-1}(C,\bl^U)+v_s^{(t)}),\bl^U)-f_{s:T}(\phi(w f_{0:s-1}(C,\bl^U)+v_s^{(0)}),\bl^U) \longrnote{(since $f_{s:T}(\cdot,\bl^U)$ is concave and similar analysis as \eqref{eqn:insidelemma})}\\
		=& f_{0:T}(C,\bl^U+t\be_s)-f_{0:T}(C,\bl^U)
		\label{eqn:proof_with_l}
		\end{align}
		Finally, since
		\begin{align*}
		&\max_{\supp(\bl)\subset S\cup\{s\}} C^{\text{RNN}}(V(T_{\bl}(\bx)))=\max_{\supp(\bl^S)\subset S}\max_{t\in[k-1]} yf_{0:T}(C,\bl^S+t\be_s), 
		\end{align*}
		we have:
		\begin{align*}
		&\max_{\supp(\bl)\subset S\cup\{s\}} C^{\text{RNN}}(V(T_{\bl}(\bx)))- \max_{\supp(\bl)\subset S} C^{\text{RNN}}(V(T_{\bl}(\bx)))\\
		=&\max_{\supp(\bl^S)\subset S}(\max_{t\in[k-1]} yf_{0:T}(C,\bl^S+t\be_s) - yf_{0:T}(C,\bl^S))\\
		\geq &\max_{\supp(\bl^U)\subset U}(\max_{t\in[k-1]} yf_{0:T}(C,\bl^U+t\be_s) - yf_{0:T}(C,\bl^U)) \shortrnote{(since \eqref{eqn:proof_with_l} holds for any $t$)} \\
		=& \max_{\supp(\bl)\subset U\cup\{s\}} C^{\text{RNN}}(V(T_{\bl}(\bx)))\\
		-& \max_{\supp(\bl)\subset U} C^{\text{RNN}}(V(T_{\bl}(\bx)))
		\end{align*}
	\end{proof}
	
	\section{Data statistics}
	\begin{table}[h]
		\centering
		\begin{tabular}{c|c|c|c}
			\toprule
			Dataset & Task & \#Train & \#Test\\
			\midrule
			Trec07p & Spam filtering &67.9k &7.5k\\
			Yelp & Sentiment analysis &560k &38k\\
			News &Fake news detection &5.3k &1.0k\\
			\bottomrule
		\end{tabular}
		\caption{Statistics of each datasets}
		\label{tab:statistics}
	\end{table}

	\section{Comparisons with other methods with concrete examples}
	\label{appendix:example}
	In this section, we provide some concrete examples to compare our method with the other related methods. The following six examples respectively show the combinations of three datasets (fake news, Trec07p, and yelp) as well as the two models we use (LSTM and WCNN). 
	
	We use red font to denote changes from sentence level paraphrasing and blue for word paraphrasing. 
	
	\subsection{Empirical example 1: \textbf{Task} - Fake news detection. \textbf{Classifier}-CNN.} 
	\textbf{Method}: Ours. \textbf{Origin}: 100\% Real. \textbf{ADV}: 71\% Fake\\
	\st{6} \Red{Six detainees} detained in raids in Belgium \st{Brussels}, Belgium (CNN) Police detained six people in \st{raids} \Red{raid} Thursday night \st{as} \Red{when} investigators \st{raced} \Red{were sent} to uncover the network behind this week's terror attacks in the Belgian capital.  The Belgian federal prosecutor's office didn't provide details about who had been detained in the Brussels raids, why they had been apprehended or whether they will face charges.  It will be decided tomorrow if these people will remain in custody, the office said in a statement released late Thursday.  Two people were taken into custody in Brussels' Jette neighborhood, one person was detained in a different part of the capital, and three people were in a vehicle in front of the federal prosecutor's office when authorities apprehended them, public broadcaster RTBF reported.  \st{So far, authorities have} \Red{Authorities} said they believe five \st{men played a part} \Red{people took a shot} in Tuesday's bombings in Belgium that killed 31 people and \st{injured} \Red{wounded} 330. Three of the attackers are dead. Two of them could still be on the loose.  Investigators are combing over evidence from surveillance footage and the explosives stash they seized from an apparent hideaway in a suburb.  Sweeps where investigators detain people first and ask questions later are likely to become an increasingly common tactic, CNN national security analyst Juliette Kayyem said.  There will be lots more of them, she said. They are going to be what's called overbroad. They are going to just try to find people or evidence that may stop the next terrorism attack, and they will figure out who they have under custody.  Khalid El Bakraoui, one of the terrorists who bombed a train near the Maelbeek metro station, is dead. Authorities believe a second unidentified person was also involved in that attack, a senior Belgian security source told CNN. But investigators don't know where \st{that} \Red{the} suspect is -- or whether \st{he's} \Red{he was} dead or alive.  Surveillance footage shows the man holding a large bag at the station, according to Belgian public broadcaster RTBF. It's not clear if he was among the at least 20 killed in that blast, RTBF said.  Authorities have released a grainy image of another suspect who they believe is on the run.  That man, they say, shown in photographs wearing a black hat, was one of three attackers at Brussels Airport. Authorities say he planted a bomb at the airport and left. The other two men in the photographs are believed to be the suicide bombers.  Fair to ask whether 'we missed the chance'  Did Belgian authorities miss a chance to stop at least one of the suspects involved in the attacks?  Bakraoui had been sentenced to nine years in prison in Belgium back in 2010 for opening fire on police officers with a Kalashnikov during a robbery, according to broadcaster RTBF and CNN affiliate RTL. Needless to say, he didn't serve all that time.  Given the facts, it is justified that ... people ask how it is possible that someone was released early and we missed the chance when he was in Turkey to detain him, said Jambon, whose offer to resign was rebuffed by Prime Minister Charles Michel.  Investigators suspect Abdeslam planned to be part of an attack by the same ISIS cell that lashed out Tuesday, a senior Belgian counterterrorism official told CNN's Paul Cruickshank.  Authorities looked Wednesday at the Brussels homes of the Bakraoui brothers. \st{Those} \Red{These} two \st{searches} \Red{findings} were not \st{conclusive} \Red{decisive}, the federal \st{prosecutor's office} \Red{prosecutors} said.  Homes were searched Thursday in several areas in and around the city, officials said.  One operation in the neighborhood of Schaerbeek stretched for hours into Friday morning. Investigators sealed off streets for several blocks. It was not immediately clear why such a large area had been cordoned.  Masked teams in hazmat gear could be seen exiting a building and heading toward a police van.  As investigations continue, a larger question looms: What could happen next?  Not long ago, Western authorities believed ISIS was focused on taking territory in Syria and Iraq, not lashing out elsewhere. But U.S. officials now think the extremist group has been sending trained militants to Europe for some time.  These men don't necessarily follow orders directly from ISIS headquarters. But they build on what they've learned, as well as a shared philosophy and approach, to develop their own terror cells and hatch their own plots.  How many more ISIS militants are in Europe, poised to attack? That's not clear.  For now, though, the top priority is tracking down the two men linked directly to Tuesday's terror.
	
	\textbf{Method}: Greedy\cite{kuleshov2018adversarial}. \textbf{Origin}: 100\% Real. \textbf{ADV}: 79\% Fake\\
	\st{6} \Blue{7} \st{detained} \Blue{detention} in raids in Belgium Brussels, Belgium (CNN) \st{Police} \Blue{cops} \st{detained} \Blue{deported} six people in raids Thursday night as \st{investigators} \Blue{investigation} raced to uncover the network behind this week's \st{terror} \Blue{terrorists} attacks in the Belgian capital. The Belgian federal prosecutor's office didn't provide details about who had been detained in the Brussels raids, why they had been apprehended or whether \st{they} \Blue{we} \st{will} \Blue{should} \st{face} \Blue{eyes} charges. It will be decided tomorrow if these people will remain in custody, the office \st{said} \Blue{told} in a \st{statement} \Blue{stating} released late Thursday. Two people were taken into custody in Brussels' Jette neighborhood, one person was \st{detained} \Blue{detention} in a different part of the capital, and three people were in a vehicle in front of the federal prosecutor's office when authorities apprehended them, public broadcaster RTBF reported. So far, authorities have said they believe five men played a part in Tuesday's bombings in Belgium what \st{killed} \Blue{wounded} \st{31} \Blue{26} \st{people} \Blue{individuals} and injured 330. Three of the attackers are dead. Two of them could still be on of loose. \st{Investigators} \Blue{Investigating} \st{are} \Blue{they} combing over evidence from surveillance \st{footage} \Blue{filmed} and the explosives stash \st{they} \Blue{never} seized from an \st{apparent} \Blue{obvious} hideaway in a suburb. Sweeps where investigators detain people first and ask questions later are likely to become an increasingly \st{common} \Blue{commonly} tactic, CNN national Security \st{analyst} \Blue{analysts} Juliette Kayyem \st{said} \Blue{say}. There will be lots more of them, \st{she} \Blue{knew} \st{said} \Blue{say}. They are going to be what's called overbroad. They are going to just try to find people \st{or} \Blue{and/or} \st{evidence} \Blue{findings} that may stop \st{the} \Blue{of} \st{next} \Blue{before} Terrorism attack, and they will figure out who they have under custody. Khalid El Bakraoui, one of the terrorists who bombed a train near the Maelbeek metro station, is dead. Authorities believe a second unidentified person was also involved in that attack, a senior Belgian Security \st{source} \Blue{sources} \st{told} \Blue{saying} CNN. But investigation don't know where that \st{suspect} \Blue{victim} \st{is} \Blue{be} -- \st{or} \Blue{any} \st{whether} \Blue{not} he's \st{dead} \Blue{dying} \st{or} \Blue{and/or} alive. Surveillance footage shows the man holding a large bag at the station, according to Belgian public broadcaster RTBF. It's not clear if he was among the at least \st{20} \Blue{26} \st{killed} \Blue{kill} in that blast, RTBF \st{said} \Blue{say}. Authorities have released \st{a} \Blue{another} grainy image \st{of} \Blue{the} another suspect who they believe is on the run. That man, they say, shown in \st{photographs} \Blue{photo} \st{wearing} \Blue{wear} \st{a} \Blue{one} \st{black} \Blue{red} hat, was one of \st{three} \Blue{twelve} \st{attackers} \Blue{attacker} at Brussels Airport. Authorities say he planted a bomb at the airport and left. The other two men in the photographs are \st{believed} \Blue{supposedly} to be the \st{suicide} \Blue{suicidal} bombers. Fair to ask whether 'we missed the chance' Did Belgian authorities miss a chance to stop at least one of the suspects involved in the attacks? Bakraoui had been sentenced to nine years in prison in Belgium back in 2010 for opening fire on \st{police} \Blue{policemen} \st{officers} \Blue{deputies} with a Kalashnikov during a robbery, according to broadcaster RTBF and CNN affiliate\Blue{s} RTL. Needless to say, he didn't serve all that time. Given the facts, it is justified \st{that} \Blue{actually} ... \st{people} \Blue{everyone} \st{ask} \Blue{tell} how it is possible \st{that} \Blue{what} someone was released early and \st{we} \Blue{you} missed of chance when he was in Turkey to detain him, \st{said} \Blue{told} Jambon, who\st{se} offer to resign was rebuffed by Prime Minister Charles Michel. \st{Investigators} \Blue{Investigation} suspect\Blue{ed} Abdeslam \st{planned} \Blue{planning} to be part of an \st{attack} \Blue{enemy} by the \st{same} \Blue{different} ISIS cell\Blue{s} that lashed out Tuesday, a \st{senior} \Blue{junior} Belgian counter-terrorism \Blue{un}official told CNN's Paul Cruickshank. Authorities looked Wednesday at the Brussels homes of the Bakraoui brothers. \st{Those} \Blue{Them} \st{two} \Blue{one} searches were not conclusive, the federal prosecutor's Office \st{said} \Blue{say}. \st{Homes} \Blue{Houses} were searched Thursday in several areas in and around the city, \st{officials} \Blue{authorities} \st{said} \Blue{say}. One operation in the neighborhood of Schaerbeek stretched for hours into Friday morning. Investigators sealed off streets for several blocks. It was not immediately clear why such a large area had been cordoned. Masked teams in hazmat gear could be seen exiting a building and heading toward a police van. As investigations continue, a larger question looms: What could happen next? Not long ago, Western authorities believed ISIS was focused on taking territory in Syria and Iraq, not lashing out elsewhere. But U.S. officials now think the extremist group has been sending trained militants to Europe for some time. These men don't necessarily follow orders directly from ISIS headquarters. But they build on what they've learned, as well as a shared philosophy and approach, to develop their own terror cells and hatch their own plots. How many more ISIS militants are in Europe, poised to attack? That's not clear. For now, though, the top priority is tracking down the two men linked directly to Tuesday's terror.
	
	\textbf{Method}: Gradient method\cite{gong2018adversarial}. \textbf{Origin}: 100\% Real. \textbf{ADV}: $99.5\%$ Real\\
	
	6 detained in raids in Belgium Brussels, Belgium (CNN) Police detained six people in raids Thursday night \st{as} \Blue{well} investigators \st{raced} \Blue{rode} to uncover \st{the} \Blue{of} \st{network} \Blue{networks} behind \st{this} \Blue{it} week's terror attacks in \st{the} \Blue{of} Belgian capital. The Belgian federal prosecutor's office didn't provide details about who had been \st{detained} \Blue{arrested} in \st{the} \Blue{of} Brussels raids, why \st{they} \Blue{have} had \st{been} \Blue{being} apprehended or whether they \st{will} \Blue{be} face charges. It will should decided tomorrow if these people \st{will} \Blue{be} \st{remain} \Blue{remains} in custody, the \st{office} \Blue{offices} said in a statement released late Thursday. \st{Two} \Blue{Three} people were taken into custody in Brussels' Jette neighborhood, \st{one} \Blue{another} person was detained in a different part \st{of} \Blue{the}  the capital, and three people \st{were} \Blue{had} in a vehicle in front of \st{the} \Blue{of} federal prosecutor's office when \st{authorities} \Blue{officials} apprehended them, public broadcaster RTBF reported. So far, authorities have said they believe five men \st{played} \Blue{playing} \st{a} \Blue{another} part in Tuesday's bombings in Belgium that killed \st{31} \Blue{29} people and injured 330. Three of the attacker be dead. Two of them could still be on the loose. Investigators are combing over evidence from surveillance \st{footage} \Blue{filmed} and the explosives stash they seized from an apparent hideaway in \st{a} \Blue{another} suburb. Sweeps where investigator detain people \st{first} \Blue{second} and ask questions later are likely to become an increasingly common\Blue{ly} tactic, CNN national security analyst Juliette Kayyem said. There will be lot more of them, she \st{said} \Blue{guess}. They are going \st{to} \Blue{able} \st{be} \Blue{should} what's \st{called} \Blue{known} overbroad. They are go\st{ing} \st{to} \Blue{able} just try\Blue{ing} to find people or evidence that may stop the next Terrorism attack, and they will figure\Blue{s} \st{out} \Blue{up} who\Blue{m} they have under custody. Khalid El Bakraoui, one of the terrorists who bombed a train near the Maelbeek metro station, is dead. \st{Authorities} \Blue{Officials} believe a second unidentified person was also involved in that attack, a senior Belgian Security sources \st{told} \Blue{talked} CNN. \st{Blut} \Blue{Though} investigator\st{s} don't \st{know} \Blue{think} \st{where} \Blue{there} \st{that} \Blue{because} suspect is -- or \st{whether} \Blue{if} he's \st{dead} \Blue{dying} \st{or} \Blue{ either} alive. Surveillance footage shows the man holding a large bag at the station, according to Belgian public broadcaster RTBF. It's not clear if he was among \st{the} \Blue{of} at least \st{20} \Blue{25} killed in \st{that} \Blue{because} blast, RTBF \st{said} \Blue{guess}. Authorities \st{have} {\color{blue} 've} release\st{d} a\Blue{nother} grainy image of \st{another} \Blue{one} suspect who \st{they} \Blue{have} believe is on the run. That man, they say, shown in photograph\st{s} wear\st{ing} a \st{black} \Blue{grey} hat, was \st{one} \Blue{another} of three attacker\st{s} at Brussels Airport. Authorities say he planted a bomb at the airport and left. The other two \Blue{wo}men in of photographs are believed to be the suicidal bombers. Fair \st{to} \Blue{able} \st{ask} \Blue{tell} whether 'we missed the chance' Did Belgian authorities miss a chance to stop\Blue{ping} at least one of the suspects involved in of attacks? Bakraoui \st{had} \Blue{came} \st{been} \Blue{being} \st{sentenced} \Blue{convicted} to nine years in prison in \st{Belgium} \Blue{Netherlands} back in 2010 for \st{opening} \Blue{closing} fire\Blue{s} on police\Blue{men} \st{officers} \Blue{policemen} with a\Blue{nother} Kalashnikov during a robbery, according to broadcaster RTBF and CNN affiliate RTL. Needless to say, he didn't serve all that time. Given the facts, it is justified that ... people \st{ask} \Blue{tell} \st{how} \Blue{what} it is possible that someone was released early and we missed of chance when He was in Turkey to detain\Blue{ing} him, said Jambon, whose offer\Blue{ing} to resign was rebuffed by Prime Minister Charles Michel. Investigators suspect Abdeslam planned \st{to} \Blue{able} \st{be} \Blue{should} part \st{of} \Blue{the} an attack by the same ISIS cell that lashed out Tuesday, a junior Belgian counter-terrorism unofficial told CNN's Paul Cruickshank. Authorities \st{looked} \Blue{seemed} Wednesday at the Brussels homes of the Bakraoui brothers. Those two searches were not conclusive, the federal prosecutor's office said. Homes were searched Thursday in \st{several} \Blue{numerous} areas in and around the city, officials \st{said} \Blue{guess}. One operation in \st{the} \Blue{of} neighborhood of Schaerbeek stretched for hours into Friday morning. Investigators sealed off streets for several blocks. It was not immediately clear why such a large areas had \st{been} \Blue{being} cordoned. Masked teams in hazmat gears could be seen exiting a\Blue{nother} building\Blue{s} and heading toward\Blue{s} a\Blue{nother} police van. As investigations continue, a \st{larger} \Blue{sized} question\Blue{s} looms: What could happen next? Not long ago, Western authorities believed ISIS was focused on taking territory in Syria and Iraq, not lashing out elsewhere. But U.S. \st{officials} \Blue{authorities} now \st{think} \Blue{know} the extremist group has being sending trained \st{militants} \Blue{insurgents} to Europe for some time. These men don't necessarily follow orders directly from ISIS headquarters. But they build on what they've learned, as well as a shared philosophy and approach, to develop their own terror cells and hatch their \st{own} \Blue{your} plots. How many \st{more} \Blue{less} ISIS militants \st{are} \Blue{these} in Europe, poised to attack? That's not clear. For now, though, the top priority is tracking down the two men linked directly to Tuesday's terror.
	
	\subsection{Empirical example 2: \textbf{Task} - Fake news detection.
		\textbf{Classifier} - LSTM.}
	
	\textbf{Method}: Ours. \textbf{Origin}: 100\% Fake. \textbf{ADV}: 77\% Real\\
	\st{Man} \Blue{Guy} punctuates high-speed chase with stop at In-N-Out Burger drive-thru Print  [Ed.\st{ - Well, that's a new one.} \Red{Okay, that 's a new one.}]  \st{A} \Blue{One} man is in custody after leading police on a bizarre chase into the east Valley on Wednesday night.  Phoenix police \st{began} \Red{has begun} following the suspect in Phoenix and the pursuit continue\st{d} into the east Valley, but it took a bizarre turn when the suspect stopped at an In-N-Out Burger restaurant’s \st{drive-thru} \Blue{drive-through} near Priest and Ray Roads in Chandler.  The suspect appeared to order food, but then drove away and got out of his pickup truck near Rock Wren Way and Ray Road.  He \st{then ran into a backyard} \Red{ran to the backyard} and tried to \st{get into a house through the back door} \Red{get in the home}.
	
	\textbf{Mehod}: greedy. \textbf{Origin}: 100\% Fake. \textbf{ADV}: 86\% Fake.\\
	\st{Man} \Blue{Guy} punctuates high-speed chase with stop\Blue{ping} at In-N-Out Burger drive-thru Print  [Ed. - Well, that’s a new one.]  A\Blue{nother} man \st{is} \Blue{which} in custody \st{after} \Blue{earlier} leading \st{police} \Blue{officers} on a bizarre chase \st{into} \Blue{out} the \st{east} \Blue{north} Valley on Wednesday night.  Phoenix \st{police} \Blue{arrested} \st{began} \Blue{begun} following \st{the} \Blue{of} suspect\Blue{s} in Phoenix and the \st{pursuit} \Blue{pursuing} continue\st{d} into the east\Blue{ern} Valley, \st{but} \Blue{though} it took a bizarre turn\Blue{ing} when \st{the} \Blue{of} suspect\Blue{s} \st{stopped} \Blue{stopping} at an In-N-Out Burger restaurant’s drive-thru near\Blue{by} Priest and Ray Road\st{s} in Chandler.  The suspect appeared to order food, but then drove away and got out of his pickup truck near Rock Wren Way and Ray Road.  He then ran into a backyard and tried to get into a house through the back door.
	
	\textbf{Method}: gradient method\cite{gong2018adversarial}. \textbf{Origin}: 100\% Fake. \textbf{ADV}: $1-2.5e^{-3}$ Fake.
	
	Man punctuates high-speed chase with stop\Blue{ping} at In-N-Out Burger\Blue{s} drive-thru Print  [Ed. - Well, that’s a new one.]  A man is in custody after leading \st{police} \Blue{arrest} on a bizarre chase into the east Valley on Wednesday night.  Phoenix police began following the suspect in Phoenix and the pursuit continued into \st{the} \Blue{of} \st{east} \Blue{west} Valley, but it took a bizarre \st{turn} \Blue{then} \st{when} \Blue{that} \st{the} \Blue{of} suspect stopped at an In-N-Out Burger restaurant’s drive-thru near\Blue{by} Priest and Ray Roads in Chandler.  The suspect appeared to order\Blue{ing} food, but then drove away and got out of his \st{pickup} \Blue{pick-up} truck near Rock\Blue{s} \st{Wren} \Blue{Chickadee} Way\Blue{s} and Ray Road.  He then ran into a backyard and tried to get into a house through the \st{back} \Blue{again} door.
	
	\subsection{Empirical example 3: \textbf{Task} - Spam filtering. \textbf{Classifier} - WCNN. }
	
	\textbf{Method}: Ours. \textbf{Origin}: 100\% Spam. \textbf{ADV}: 77\% Ham\\
	Become Fit For Life!  HGH is a very complex molecule produced by the anterior lobe of the pituitary gland, which is located at the base of the brain. While it stimulates growth in children, it is important for maintaining \st{a healthy body} \Red{healthy bodies} composition and well-being in adults. It is the primary \st{hormone} \Blue{estrogen} that controls \st{many} \Blue{several} of the body's organs and it stimulates tissue repair,  brain\Blue{s} functions, cell replacement, and enzyme function. Determining the  levels of IGF-1 (Insulin Growth Factor) is how we measure HGH in the body. Receive a younger \st{future} \Blue{potential} with HGH 
	
	\textbf{Method}: Greedy\cite{kuleshov2018adversarial}. \textbf{Origin}: 100\% Spam. \textbf{ADV}: 71\% Ham
	
	Become Fit For Life!  HGH is a \st{very} \Blue{fairly} complex molecule produce\st{d} by the anterior lobe \st{of} \Blue{the} \st{the} \Blue{of} pituitary gland, which \st{is} \Blue{has} located at the base of the brain. While \st{it} \Blue{that} stimulates \st{growth} \Blue{growing} in children, \st{it} \Blue{what} is \st{important} \Blue{significant} for maintaining a\Blue{nother} healthy  \st{body} 
	\Blue{bodies} composition and well-being in adults. It is the \st{primary} \Blue{secondary} \st{hormone} \Blue{progesterone} \st{that} \Blue{could} controls \st{many} \Blue{several} of the body's organs and \st{it} \Blue{that} stimulates tissue repair,  brain\Blue{s} functions, cell replacement, and enzyme function. \st{Determining} \Blue{Determine} the levels of IGF-1 (Insulin Growth Factor) \st{is} \Blue{which} \st{how} \Blue{understand} we measure HGH in \st{the} \Blue{of} body. Receive a younger future with HGH 
	
	\textbf{Method}: Gradient method\cite{gong2018adversarial}. \textbf{Origin}: 100\% Spam. \textbf{ADV}: $1-2.7e^{-5}$ spam
	
	Become Fit For Life!  HGH is a very complex molecule produced by the anterior lobe of the pituitary gland, \st{which} \Blue{that} is \st{located} \Blue{situated} at the base of the brain. While it stimulate\st{s} growth in children, \st{it} \Blue{but} \st{is} \Blue{has} important for maintaining a healthy  \st{body} \Blue{bodies} composition\Blue{s} and well-being in adults. It is the \st{primary} \Blue{secondary} hormone\Blue{s} that controls many of the body's organ\st{s} and \st{it} \Blue{but} stimulate\st{s} tissue\Blue{s} repair,  brain\Blue{s} functions, cell replacement, and enzyme function. Determining the  levels of IGF-1 (Insulin Grow\st{th}\Blue{ing} Factor) is how we measure HGH in \st{the} \Blue{of} body. Receive a younger future with HGH 
	
	\subsection{Empirical Example 4: \textbf{Task} - Spam filtering. \textbf{Classifier} - LSTM.}
	\textbf{Method}: Ours. \textbf{Origin}: 100\% Ham. \textbf{ADV}: 87\% Spam\\
	I've always run jigdo-lite against my own mirror.  It \st{provides} \Blue{offers} \st{two} \Blue{couple} things:  \\
	1) Proves \st{I can} \Red{you are able to} build the ISOs from what I have mirrored locally.\\
	2) Doesn't waste additional bandwidth.  As long as the checksums match what is provided from the official ISO image master\Blue{s} site, I don't see what the difference would be.  Anyone else do this? \st{$:)$} \Blue{\char`^\char`_\char`^}\\
	Will Simon Paillard wrote: \\
	> On Mon, Apr 09, 2007 at 08:43:07AM -0400, Jean-Francois Chevrette wrote: \\
	> > Hi,\\ 
	> > \\
	> > does anyone have a\Blue{nother} straightforward guide on how to use jigdo to build and mirror ISOs? I've been reading both jigdo documentation and debian's \st{webpage} \Blue{web-site} on the subjet and it just won't work. \\
	> \\
	> Maybe with this one :\\ \url{http://www.debian.org/CD/mirroring/\#jigdomirror} \\
	> and the related links ? \\
	>\\  
	> Best regards, 
	
	\textbf{Method}: Greedy\cite{kuleshov2018adversarial}. \textbf{Origin}: 100\% Ham. \textbf{ADV}: 90\% Spam\\
	I've always run jigdo-lite against \st{my} \Blue{myself} own mirror.  It \st{provides} \Blue{offers} \st{two} \Blue{five} things:  \\
	1) Proves \st{I} \Blue{u} can reliably build the ISOs from \st{what} \Blue{something} \st{I} \Blue{im} \st{have} {\color{blue} 've} mirrored locally.\\
	2) Doesn't waste \st{additional} \Blue{extra} bandwidth.  As long as the checksums match\Blue{es} what is provided from the \Blue{un}official ISO image master site, \st{I} \Blue{thats} don't \st{see} \Blue{'ll} what the difference would be.  \st{Anyone} \Blue{Somebody} else do \st{this} \Blue{you}? \st{:)} \Blue{!!}\\
	\st{Will} \Blue{Must} Simon Paillard wrote:\\ 
	> On Mon, Apr 09, 2007 at 08:43:07AM -0400, Jean-Francois Chevrette wrote: \\
	> > Hi, \\
	> > \\
	> > does anyone have a\Blue{nother} straightforward guide on how \st{to} \Blue{able} use jigdo to build  and mirror ISOs? I've been \st{reading} \Blue{writing} \st{both} \Blue{other} jigdo documentation and debian's  web\st{page} on the subjet and it just won't work. \\
	> \\
	> Maybe with this one :\\
	\url{ http://www.debian.org/CD/mirroring/\#jigdomirror} \\
	> and the related links ? \\
	>\\  
	> Best regards, 
	
	\textbf{Method}: Gradient method\cite{gong2018adversarial}. \textbf{Origin}: 100\% Ham. \textbf{ADV}: $1-2.2e^{-15}$ Ham\\
	I've always run jigdo-lite against \st{my} \Blue{myself} own mirror.  \st{It} \Blue{What} provides two things:  \\
	1) Proves \st{I} \Blue{you} can reliably build\Blue{ing} the ISOs from what I have mirrored locally.\\
	2) Doesn't waste additional bandwidth.  As long as the checksums match what is provided from the \Blue{un}official ISO image master site, \st{I} \Blue{u} don't see what the difference \st{would} \Blue{could} be.  Anyone else do this? \st{:)} \Blue{:-)}\\
	Will Simon Paillard wrote: \\
	> On Mon, Apr 09, 2007 at 08:43:07AM -0400, Jean-Francois Chevrette wrote: \\
	> > Hi, \\
	> > \\
	> > does anyone have a straightforward guide on \st{how} \Blue{what} \st{to} \Blue{able} use jigdo to build and mirror ISOs? I've been reading both jigdo documentation and debian's  \st{webpage} \Blue{web-site} on the subjet and it just won't work. \\
	> 
	> Maybe with this one :\\ \url{http://www.debian.org/CD/mirroring/\#jigdomirror} \\
	> and the related links ? \\
	>\\  
	> Best regards, 
	
	\subsection{Empirical Example 5: \textbf{Task} - Sentiment analysis. \textbf{Classifier} - CNN.}
	
	\textbf{Method}: Ours. \textbf{Origin}: 100\% Positive. \textbf{ADV}: 93\% Negative\\
	This Starbucks location is located in the Bally's Grand Bazaar Shops. It's open 24/7 and it is huge. There is plenty of seating. Most of the seating is stadium type seating with benches. They also have an out door patio. The staff is very friendly and attentive to the guests. I do notice that they are under staffed sometimes when they are busy. They \Red{'ll} get your drinks \st{out} pretty fast \st{though}. Also, this \st{location} \Red{place} is not owned by the \st{casino} \Red{property} so they \st{don't} \Red{do n't} charge outrageous prices \st{like the location} \Red{as a place} on \st{the} \Red{an} Linq promenade \st{does}.  Definitely one of my favorite Starbucks stores. Stop by if your on the Strip.
	
	\textbf{Method}: Greedy\cite{kuleshov2018adversarial}. \textbf{Origin}: 100\% Positive. \textbf{ADV}: 74\% Negative\\
	This Starbucks location \st{is} \Blue{be} located in the Bally's Grand Bazaar Shops. It's open 24/7 and \st{it} \Blue{nothing} \st{is} \Blue{be} huge. \st{There} \Blue{Nothing} is plenty \st{of} \Blue{the} seating. \st{Most} \Blue{Extremely} of \st{the} \Blue{of} seating \st{is} \Blue{has} stadium type\Blue{s} \st{seating} \Blue{seats} with benches. \st{They} \Blue{Have} \st{also} \Blue{will} \st{have} \Blue{never} an out door patio. The staff is very friendly and attentive to the guests. I do notice that they are under staffed sometimes when they are busy. They get your drinks out pretty fast though. Also, this location is not owned by the casino so they don't charge outrageous prices like the location on the Linq promenade does.  Definitely one of my favorite Starbucks stores. Stop by if your on the Strip.
	
	\textbf{Method}: Gradient method\cite{gong2018adversarial}.
	\textbf{Origin}: 100\% Positive. \textbf{ADV}: $1-6.9e^{-12}$ Positive
	\st{This} \Blue{It} \st{Starbucks} \Blue{Mcdonalds} location is located in the Bally's Grand Bazaar Shops. It's open 24/7 and it is huge. There is plenty of seating. \st{Most} \Blue{Many} \st{of} \Blue{the} the seating is stadium type seating with benches. They also \st{have} {\color{blue} 've} an \st{out} \Blue{up} door patio. The staff is very friendly and attentive to the guests. I do notice that they are under staffed sometimes when they are busy. They get\Blue{ting} your drinks out pretty fast though. Also, this location is not owned by \st{the} \Blue{of} casino \st{so} \Blue{too} they don't charge outrageous prices \st{like} \Blue{think} the location on \st{the} \Blue{of} Linq \st{promenade} \Blue{seafront} does.  Definitely one of my favorite Starbucks stores. Stop by \st{if} \Blue{unless} your on the Strip.

	\subsection{Empirical Example 6: \textbf{Task} - Sentiment analysis. \textbf{Classifier} - LSTM.}
	
	\textbf{Method}: Ours. \textbf{Origin}: 100\% Positive. \textbf{ADV}: 93\% Negative\\
	I suppose I should write a review here since my little Noodle-oo is currently serving as their spokes dog in the photos. We both love Scooby Do's. They treat my little butt-faced dog like a prince and are receptive to correcting anything about the cut that I perceive as being weird. Like that funny poofy pompadour. Mohawk it out, yo. Done. In like five seconds my little man was looking fabulous and bad ass. Not something easily accomplished with a prancing pup that literally chases butterflies through tall grasses. (He ended up looking like a little lamb as the cut grew out too. So adorable.)  The shampoo they use here is also amazing. Noodles usually smells like tacos (a combination of beef stank and corn chips) but after getting back from the Do's, he smelled like Christmas morning! Sugar and spice and everything nice instead of frogs and snails and puppy dog tails. He's got some gender identity issues to deal with.   \st{The pricing is also cheaper than some of the big name conglomerates out there} \Red{The price is cheaper than some of the big names below}. I'm talking to you Petsmart! I've taken my other pup to Smelly Dog before, but unless I need dog sitting play time after the cut, I'll go with Scooby's. They genuinely seem to like my little Noodle monster.
	
	\textbf{Method}: Greedy\cite{kuleshov2018adversarial}. \textbf{Origin}: 100\% Positive. \textbf{ADV}: 88\% Negative\\
	I suppose I should write a review here since my little Noodle-oo is currently serving as their spokes dog in the photos. We both love Scooby Do's. They treat my little butt-faced dog like a prince and are receptive to correcting anything about the cut that I perceive as being weird. Like that \st{funny} \Blue{humorous} poofy pompadour. Mohawk it out, yo. Done. In like five seconds my little \Blue{wo}man was looking fabulous and bad ass. Not something easily accomplished with a prancing pup that literally chases butterflies through tall grasses. (He ended up looking like a little lamb as the cut grew out too. So adorable.)  The shampoo they use here is also amazing. Noodles usually smells like tacos (a combination \st{of} \Blue{between} beef stank and corn chips) but after getting back from the Do's, he smelled like Christmas morning! Sugar and spice and everything nice instead of frogs and snails and puppy dog tails. He's got some gender identity issues to deal with. The pricing is also cheaper than some of the big name conglomerates out there. I'm talking to you Petsmart! I've taken my other pup to Smelly Dog before, but unless I need dog sitting play time after the cut, I'll go with Scooby's. They genuinely seem to like my little Noodle monster.
	
	\textbf{Method}: Gradient method\cite{gong2018adversarial}.
	\textbf{Origin}: 100\% Positive. \textbf{ADV}: 93\% Negative\\
	I suppose I should \st{write} \Blue{Write} a review here since my little Noodle-oo is currently serving as their spokes dog in the photos.   We both love Scooby Do's. They \st{treat} \Blue{cure} my little butt-faced dog like a\Blue{nother} \st{prince} \Blue{knight} \st{and} \Blue{but} are receptive to correcting anything about the cut that I perceive \st{as} \Blue{that} being weird. Like that funny poofy pompadour. Mohawk it out, yo. Done. In like \st{five} \Blue{eleven} \st{seconds} \Blue{secs} my little man was looking fabulous and bad ass. Not something \st{easily} \Blue{readily} accomplished with a \st{prancing} \Blue{strutting} pup that literally chases butterflies through tall grasses. (He ended up looking like a little \st{lamb} \Blue{beef} as \st{the} \Blue{The} cut grew out too. So adorable.)  The shampoo they use here is also amazing. Noodles usually smells like \st{tacos} \Blue{quesadillas} (a combination of beef stank and corn chips) but after getting back from the Do's, he smelled like Christmas morning! Sugar and \st{spice} \Blue{cumin} and everything nice instead of frogs and snails and puppy dog tails. He's got \st{some} \Blue{those} \st{gender} \Blue{sexuality} identity \st{issues} \Blue{difficulties} to \st{deal} \Blue{contract} with.   The pricing is also cheaper than some of the \st{big} \Blue{huge} name conglomerates out there. I'm talking to you Petsmart! I've \st{taken} \Blue{brought} my other pup to Smelly Dog before, but unless I need dog sitting play time after the cut, I'll go with Scooby's. They \st{genuinely} \Blue{nonetheless} seem to like my little Noodle monster.

\end{document}